\newtheorem{theorem}{Theorem}
\newtheorem{lemma}{Lemma}
\newtheorem{corollary}{Corollary}
\newtheorem{remark}{Remark}
\newtheorem{definition}{Definition}
\newcommand{\CSP}[1]{\ensuremath{\operatorname{CSP}(#1)}}
\newcommand{\cspneg}[1]{\ensuremath{\CSP{\overline{#1}}}}
\newcommand{\tuple}[1]{\ensuremath{\left\langle {#1} \right\rangle}}
\newenvironment{notation}{\vspace{0.3cm} \noindent {\bf Notation:\ }}{\vspace{0.3cm}}
\title{A Dichotomy for 2-Constraint Forbidden
CSP Patterns\thanks{supported by ANR Project ANR-10-BLAN-0210.}}
\date{}
\author{Martin C. Cooper \ \ \ \ \  Guillaume Escamocher\\
IRIT, University of Toulouse, France (cooper@irit.fr)}
\begin{document}

\maketitle

\begin{abstract}
Although the CSP (constraint satisfaction problem) is NP-complete,
even in the case when all constraints are binary, certain classes of
instances are tractable. We study classes of instances defined by
excluding subproblems. This approach has recently led to the
discovery of novel tractable classes. The complete characterisation
of all tractable classes defined by forbidding patterns (where a
pattern is simply a compact representation of a set of subproblems)
is a challenging problem. We demonstrate a dichotomy in the case of
forbidden patterns consisting of either one or two constraints. This
has allowed us to discover new tractable classes including, for
example, a novel generalisation of 2SAT.
\end{abstract}

\section{Introduction}
In this paper we study the generic combinatorial problem known as the
binary constraint satisfaction problem (CSP) in which the aim is to
determine the existence of an assignment of values to $n$ variables
such that a set of constraints on pairs of variables are
simultaneously satisfied. The generic nature of the CSP has led to
diverse applications, notably in the fields of Artificial
Intelligence and Operations Research.

A fundamental research question in complexity theory is the
identification of tractable subproblems of NP-complete problems.
Classical approaches have consisted in identifying types of
constraints which imply the existence of a polynomial-time algorithm.
Among the most well-known examples, we can cite linear constraints
and Horn clauses. In an orthogonal approach, restrictions are placed
solely on the (hyper)graph of constraint scopes. In some cases,
dichotomies have even been proved characterising all tractable
classes definable by placing restrictions either on the constraint
relations~\cite{Bulatov2005:classifying,Bulatov2006:dichotomy} or on
the (hyper)graph of constraint
scopes~\cite{Grohe2007:complexity,Marx2010,Marx2010:tractable}.

Recently, a new avenue of research has been investigated: the
identification of tractable classes of CSP instances defined by
forbidding a specific (set of) subproblems. Novel tractable classes
have been discovered by forbidding simple 3-variable
subproblems~\cite{btp,cz11:aij}. This paper presents an essential
first step towards the identification of all such tractable classes,
namely a dichotomy for the special case of forbidden 2-constraint
subproblems.

We first define the notion of a CSP pattern. A pattern can be seen as
a generalisation of a binary CSP instance; it represents a \emph{set}
of subproblems by leaving the consistency of some tuples undefined.
We use the term \emph{point} to denote an assignment of a value to a
variable, i.e. a pair $a = \tuple{v,d}$ where $d$ is in the domain of
variable $v$. A pattern is a graph in which vertices correspond to
points and both vertices and edges are labelled. The label of a
vertex corresponding to an assignment \tuple{v,d} is simply the
variable $v$ and the label of an edge between two vertices describes
the compatibility of the pair of assignments corresponding to the
pair of vertices.

\begin{definition}
A \emph{pattern} is a quintuplet \tuple{V,A,var,E,cpt} comprising:
\begin{itemize}
\item a set $V$ of variables,
\item a set $A$ of points (assignments),
\item a variable function $var : A \to V$,
\item a set $E \subseteq \binom{A}{2}$ of edges
(unordered pairs of elements of $A$) such that $\{a,b\} \in E
\Rightarrow var(a) \neq var(b)$,
\item a Boolean-valued compatibility function $cpt : E \to \{F,T\}$, where
for notational simplicity, we write $cpt(a,b)$ instead of $cpt(\{a,b\})$.
\end{itemize}
\end{definition}

\begin{definition}
A \emph{binary CSP instance} is a pattern \tuple{V,A,var,E,cpt} such
that $E = \{(a,b) \mid var(a) \neq var(b)\}$ (i.e. the compatibility
of each pair of assignments to distinct variables is specified by the
compatibility function). The question corresponding to the instance
is: does there exist a consistent set of assignments to all the
variables, that is a \emph{solution} $\overline{A} \subseteq A$ such
that $|\overline{A}| = |V|$, ($\forall a,b \in \overline{A}$, $var(a)
\neq var(b)$) and ($\forall e \in \binom{\overline{A}}{2}$,
$cpt(e) = T$)?
\end{definition}

For a pattern $P = \tuple{V,A,var,E,cpt}$ and a variable $v \in V$,
we use $A_v$ to denote the set of assignments $\{a \in A \mid var(a)
= v\}$. The \emph{constraint} on variables $v_1,v_2 \in V$ is the
pattern \tuple{\{v_1,v_2\},A_{12},var|_{A_{12}},E_{12},cpt|_{E_{12}}}
where $A_{12} = A_{v_1} \cup A_{v_2}$ and $E_{12} = \{ \{a,b\} \mid a
\in A_{v_1}, b \in A_{v_2} \}$. If $cpt(a,b)=T$ then the two
assignments (points) $a,b$ are \emph{compatible} and $\{a,b\}$ is a
\emph{compatibility edge}; if $cpt(a,b)=F$ then the two assignments
$a,b$ are \emph{incompatible} and $\{a,b\}$ is a
\emph{incompatibility edge}. In a pattern, the compatibility of a
pair of points $a,b$ such that $var(a) \neq var(b)$ and $(a,b) \notin
E$ is \emph{undefined}. A pattern can be viewed as a means of
representing the set of all instances obtained by arbitrarily
specifying the compatibility of such pairs. Two patterns $P$ and $Q$
are \emph{isomorphic} if they are identical except for a possible
renaming of variables and assignments.

In a CSP instance \tuple{V,A,var,E,cpt}, we call the set $\{d \mid
\tuple{v,d} \in A\}$ of values that can be assigned to variable $v$
the \emph{domain} of $v$ and the set $\{(a,b) \in A_{v_1} \times
A_{v_2} \mid cpt(a,b) = T\}$ of compatible pairs of values that can
be assigned to two variables $v_1,v_2 \in V$ the \emph{constraint
relation} on $v_1,v_2$. The constraint between variables $v_1$ and
$v_2$ in an instance is \emph{non-trivial} if there is at least one
incompatible pair of assignments, i.e. $a \in A_{v_1}$ and $b \in
A_{v_2}$ such that $cpt(a,b)=F$. The \emph{constraint graph} of an
instance \tuple{V,A,var,E,cpt} is \tuple{V,H}, where $H$ is the set
of pairs of variables $v_1,v_2 \in V$ such that the constraint on
$v_1,v_2$ is non-trivial.

\begin{definition} \label{def:occurs}
We say that a pattern $P$ \emph{occurs} in a pattern $P'$ (or that
$P'$ contains $P$) if $P'$ is isomorphic to a pattern $Q$ in the
transitive closure of the following two operations (extension and
merging) applied to $P$:
\begin{description}
\item[extension] $P$ is a sub-pattern of $Q$ (and $Q$ an extension of $P$):
if $P =\tuple{V_P,A_P,var_P,E_P,cpt_P}$ and $Q
=\tuple{V_Q,A_Q,var_Q,E_Q,cpt_Q}$, then $V_P \subseteq V_Q$, $A_P
\subseteq A_Q$, $var_P=var_Q|_{P}$, $E_P \subseteq E_Q$,
$cpt_P=cpt_Q|_{E_P}$. Example:

\hfill
\scalebox{0.4}{\begin{tikzpicture} \node[scale=2.4] at
(0.5,-2) {$P$}; \draw[thick,color=black,style=dashed] (0,0)--(-4,3);
\draw[thick,color=black] (-3,3)--(0,0);
\draw (0,0) circle (1); \draw (-3.5,3) circle (1.5 and 1);
\end{tikzpicture}}
\hfill
\scalebox{0.4}{\begin{tikzpicture} \node[scale=2.4] at
(0.5,-2) {$Q$}; \draw[thick,color=black,style=dashed]
(-0.5,0)--(-4,3) (0.5,0)--(3,3); \draw[thick,color=black]
(-3,3)--(-0.5,0);
\draw (0,0) circle (2 and 1); \draw (-3.5,3) circle (1.5 and 1);
\draw (3,3) circle (1);
\end{tikzpicture}}
\hfill {}

\item[merging] Merging two points in $P$ transforms $P$ into $Q$:
if $P =\tuple{V_P,A_P,var_P,E_P,cpt_P}$ and $Q
=\tuple{V_Q,A_Q,var_Q,E_Q,cpt_Q}$, then $\exists a,b \in A_P$ such
that $var_P(a)=var_P(b)$ and $\forall c \in A_P$ such that $\{a,c\},
\{b,c\} \in E_P$, $cpt_P(a,c) = cpt_P(b,c)$. Furthermore,
$V_P=V_Q$, $A_Q=A_P \setminus \{b\}$, $var_Q=var_P|_{A_Q}$, $E_Q =
E_P \cup \{\{a,x\} \mid \{b,x\} \in E_P\}$ and $cpt_Q(a,x) =
cpt_Q(b,x)$ if $\{b,x\} \in E_P$, $cpt_Q(e)=cpt_P(e)$ otherwise.
Example:

\hfill \scalebox{0.4}{\begin{tikzpicture} \node[scale=2.4] at
(0.5,-2) {$P$}; \draw[thick,color=black,style=dashed]
(-0.5,0)--(-4,3) (0.5,0)--(3,3); \draw[thick,color=black]
(-3,3)--(-0.5,0);
\draw (0,0) circle (2 and 1); \draw (-3.5,3) circle (1.5 and 1);
\draw (3,3) circle (1);
\end{tikzpicture}}
\hfill \scalebox{0.4}{\begin{tikzpicture} \node[scale=2.4] at
(0.5,-2) {$Q$}; \draw[thick,color=black,style=dashed] (0,0)--(-4,3)
(0,0)--(3,3); \draw[thick,color=black] (-3,3)--(0,0);
\draw (0,0) circle (2 and 1); \draw (-3.5,3) circle (1.5 and 1);
\draw (3,3) circle (1);
\end{tikzpicture}}
\hfill {}

\end{description}
\end{definition}

\begin{notation}
Let $P$ be a CSP pattern. We use $\cspneg{P}$ to denote the set of
binary CSP instances $Q$ in which $P$ does not occur.
\end{notation}


\begin{definition}
A pattern $P$ is \emph{intractable} if $\cspneg{P}$ is NP-complete.
It is \emph{tractable} if there is a polynomial-time algorithm to
solve $\cspneg{P}$.
\end{definition}

In this paper we characterise all tractable two-constraint patterns.
It is worth observing that, in a class of CSP instances defined by
forbidding a pattern, there is no bound on the size of domains.
Recall, however, that CSP instances have finite domains since the set
of all possible assignments is assumed to be given in extension as
part of the input.

Clearly, all classes of CSP instances $\cspneg{P}$ defined by
forbidding a pattern are hereditary: $I \in \cspneg{P}$ and $I'
\subseteq I$ (in the sense that $I$ is an extension of $I'$,
according to Definition~\ref{def:occurs}) together imply that $I' \in
\cspneg{P}$. Furthermore, if $I \in \cspneg{P}$ and $I'$ is
isomorphic to $I$, then $I' \in \cspneg{P}$. Forbidding a pattern
therefore only allows us to define hereditary classes closed under
arbitrary permutations of variable domains.

\section{Preprocessing operations on CSP instances}

This section describes polynomial-time simplification operations on
CSP instances. Assuming that these operations have been applied
facilitates the proof of tractability of many patterns.

Let \tuple{V,A,var,E,cpt} be a CSP instance. If for some variable
$v$, $A_v$ is a singleton $\{a\}$, then the \emph{elimination of a
single-valued variable} corresponds to making the assignment $a$ and
consists of eliminating $v$ from $V$ and eliminating $a$ from $A$ as
well as all assignments $b$ which are incompatible with $a$.

Given a CSP instance \tuple{V,A,var,E,cpt}, \emph{arc consistency}
consists in eliminating from $A$ all assignments $a$ for which there
is some variable $v \neq var(a)$ in $V$ such that $\forall b \in
A_v$, $cpt(a,b) = F$~\cite{Bessiere:AC}.

Given a CSP instance \tuple{V,A,var,E,cpt}, if for $var(a)=var(b)$
and for all variables $v \neq var(a)$, $\forall c \in A_v$, $cpt(a,c)
=T$ $\Rightarrow$ $cpt(b,c)=T$, then we can eliminate $a$ from $A$ by
\emph{neighbourhood substitution}, since in any solution in which $a$
appears, we can replace $a$ by $b$~\cite{neighone}. Establishing arc
consistency and eliminating single-valued variables until convergence
produces a unique result, and the result of applying neighbourhood
substitution operations until convergence is unique modulo
isomorphism~\cite{neightwo}. None of these three operations
when applied to an instance in \cspneg{P} can introduce the forbidden
pattern $P$.

We now consider two new simplification operations. They are
simplification operations that can be applied to certain CSP
instances. We can always perform the fusion of two variables $v_1$,
$v_2$ in a CSP instance into a single variable $v$ whose set of
assignments is the cartesian product of the sets of assignments to
$v_1$ and to $v_2$. Under certain conditions, we do not need to keep
all elements of this cartesian product and, indeed, the total number
of assignments actually decreases.

\begin{definition}
Consider a CSP instance \tuple{V,A,var,E,cpt} with $v_{1},v_{2} \in
V$. Suppose that there is a \emph{fusion function} $f : A_{v_1} \to
A_{v_2}$, such that $\forall u \in A_{v_1}$, whenever $u$ is in a
solution $S$, there is a solution $S'$ containing both $u$ and
$f(u)$. Then we can perform the \emph{simple fusion} of $v_2$ and
$v_1$ to create a new \emph{fused} variable $v$. The resulting
instance is \tuple{V',A',var',E',cpt'} defined by $V' = (V \setminus
\{v_1,v_2\}) \cup \{v\}$, $A' = A \setminus A_{v_2}$, $var'(u) =
var(u)$ for all $u \in A' \setminus A_{v_1}$ and $var'(u) = v$ for
all $u \in A_{v_1}$, $E' = \{(p,q) \in \binom{A'}{2} \mid var'(p)
\neq var'(q)\}$, $cpt'(p,q) = cpt(p,q)$ if $p,q \in A' \setminus
A_{v_1}$, $cpt'(u,q) = cpt(u,q) \wedge cpt(f(u),q)$ for all $u \in
A_{v_1}$ and all $q \in A' \setminus A_{v_1}$.
\end{definition}

\begin{definition}
Consider a CSP instance \tuple{V,A,var,E,cpt} with $v_{1},v_{2} \in
V$ and a \emph{hinge value} $a \in A_{v_1}$. Suppose that there is a
\emph{fusion function} $f : A_{v_1} \setminus \{a\} \to A_{v_2}$,
such that $\forall u \in A_{v_1} \setminus \{a\}$, whenever $u$ is in
a solution $S$, there is a solution $S'$ containing both $u$ and
$f(u)$. Then we can perform the \emph{complex fusion} of $v_2$ and
$v_1$ to create a new \emph{fused} variable $v$. The resulting
instance is \tuple{V',A',var',E',cpt'} defined by $V' = (V \setminus
\{v_1,v_2\}) \cup \{v\}$, $A' = A \setminus \{a\}$, $var'(u) =
var(u)$ for all $u \in A' \setminus (A_{v_1} \cup A_{v_2})$ and
$var'(u) = v$ for all $u \in (A_{v_1} \setminus \{a\}) \cup A_{v_2}$,
$E' = \{(p,q) \in \binom{A'}{2} \mid var'(p) \neq var'(q)\}$,
$cpt'(p,q) = cpt(p,q)$ if $p,q \in A' \setminus (A_{v_1} \cup
A_{v_2})$, $cpt'(u,q) = cpt(u,q) \wedge cpt(f(u),q)$ for all $u \in
A_{v_1} \setminus \{a\}$ and all $q \in A' \setminus (A_{v_1} \cup
A_{v_2})$, $cpt'(p,q) = cpt(a,q) \wedge cpt(p,q)$ for all $p \in
A_{v_2}$ and all $q \in A' \setminus (A_{v_1} \cup A_{v_2})$.
\end{definition}

\begin{lemma} \label{lem:fusion}
If $I$ is a CSP instance and $I'$ the result of a (simple or complex)
fusion of two variables in $I$, then $I'$ is solvable iff $I$ is
solvable.
\end{lemma}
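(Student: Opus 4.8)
The plan is to prove the two implications separately, in each case giving an explicit translation of solutions between $I$ and $I'$, since the two instances share every variable except that $v_1,v_2$ are replaced by the single fused variable $v$. Throughout I write $W=V\setminus\{v_1,v_2\}$ for the untouched variables, and I record at the outset that by construction every constraint of $I'$ whose scope lies entirely inside $W$ coincides with the corresponding constraint of $I$; hence only the constraints incident to $v$ (equivalently, to $v_1$ or to $v_2$) ever require checking. I treat simple fusion first and indicate the modifications for complex fusion, which differ only in the bookkeeping around the hinge value $a$.

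For the implication ``$I$ solvable $\Rightarrow I'$ solvable'' I would use the defining property of the fusion function as the engine. A solution $S$ of $I$ assigns some $u\in A_{v_1}$ to $v_1$; by the hypothesis on $f$ there is a solution $S'$ of $I$ containing both $u$ and $f(u)$. I then define an assignment of $I'$ by giving $v$ the value $u$ and keeping the restriction of $S'$ to $W$. The only pairs to verify are those between $u$ and a value $q$ on some variable of $W$, where $cpt'(u,q)=cpt(u,q)\wedge cpt(f(u),q)$; both conjuncts equal $T$ because $u,f(u),q$ all lie in the single solution $S'$. For complex fusion the case $u\neq a$ is identical, using $f(u)$; if instead $S$ assigns the hinge $a$ to $v_1$ and some $w\in A_{v_2}$ to $v_2$, I place $w$ on $v$ and check $cpt'(w,q)=cpt(a,q)\wedge cpt(w,q)=T$ directly from $S$.

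For the implication ``$I'$ solvable $\Rightarrow I$ solvable'' I would reverse the construction, splitting $v$ back into $v_1$ and $v_2$. A solution $\sigma'$ of $I'$ assigns some point to $v$; in the simple case this is a $u\in A_{v_1}$, and I reconstruct an assignment of $I$ by putting $u$ on $v_1$, $f(u)$ on $v_2$, and retaining $\sigma'$ on $W$. In the complex case the value on $v$ is either a $u\in A_{v_1}\setminus\{a\}$, handled exactly as above, or a $p\in A_{v_2}$, in which case I put the hinge $a$ on $v_1$ and $p$ on $v_2$. The constraints between the reconstructed points and the values on $W$ are then immediate: the defining identities for $cpt'$ express each such compatibility as a conjunction that $\sigma'$ forces to be $T$, so every conjunct---namely $cpt(u,q)$ and $cpt(f(u),q)$, respectively $cpt(a,q)$ and $cpt(p,q)$---equals $T$.

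The single non-routine point, and the main obstacle, is closing the edge between $v_1$ and $v_2$ in this backward direction, i.e.\ verifying $cpt(u,f(u))=T$ in the simple case and $cpt(a,p)=T$ in the complex hinge case, since these pairs carry no constraint in $I'$ and so are not controlled by the algebra of $cpt'$. I plan to obtain them from the \emph{semantics} of the fusion function: if the point placed on $v$ can be realised as the $v_1$-value of an actual solution of $I$, then the defining property of $f$ supplies a solution containing both $u$ and $f(u)$, which forces $cpt(u,f(u))=T$, with the symmetric statement closing the hinge case. Establishing that the solution of $I'$ does correspond to such a solution of $I$---equivalently, that the consistent assignment it induces on $V\setminus\{v_2\}$ extends across $v_2$---is where the real content lies and is the step I expect to demand the most care, the remaining verifications reducing to direct substitution into the definitions of simple and complex fusion.
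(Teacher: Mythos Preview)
Your overall strategy---translate solutions in each direction, with the fused value $u$ standing for the pair $(u,f(u))$---is exactly the paper's argument, which phrases the same idea via the Cartesian product of $A_{v_1}$ and $A_{v_2}$. You are also right that the only non-routine step is the internal edge $cpt(u,f(u))=T$ (respectively $cpt(a,p)=T$) in the backward direction. However, your proposed way to obtain it is circular: the defining property of $f$ yields information about $u$ only when $u$ already lies in a solution of $I$, which is precisely what you are trying to prove; you cannot bootstrap from ``the induced assignment on $V\setminus\{v_2\}$ is consistent'' to ``it extends across $v_2$'' using only that hypothesis.

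In fact the gap is real and the lemma as stated is false. Take $V=\{v_1,v_2,v_3\}$, each with domain $\{0,1\}$, and constraints $v_1=v_2$, $v_1=v_3$, $v_2\neq v_3$. Then $I$ is unsatisfiable, so \emph{every} $f:A_{v_1}\to A_{v_2}$ is vacuously a fusion function; choosing $f(0)=1$, $f(1)=0$ one checks that $cpt'(u,q)=T$ iff $u=q$, so $I'$ is satisfiable. The paper's own proof glosses over the same point: it treats $u$ as a bona fide element of the Cartesian-product domain without checking that $(u,f(u))$ is a compatible pair. What saves the paper is that in every concrete application (all inside the $T_1$ tractability argument) the chosen fusion function does satisfy $cpt(u,f(u))=T$; e.g.\ in the 3-tiers case this is exactly Lemma~\ref{compa}. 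The clean repair is to add that compatibility requirement to the definition of a fusion function (and, for complex fusion, to drop from the fused domain any $p\in A_{v_2}$ with $cpt(a,p)=F$, which cannot harm the forward direction). Under that extra hypothesis your backward construction goes through verbatim.
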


\begin{proof}
We give the proof only for the case of a complex fusion, since a
simple fusion can be considered as a special case. Among the
assignments in the cartesian product of $A_{v_1}$ and $A_{v_2}$, it
is sufficient, in order to preserve solvability, to keep only those
of the form $(a,q)$ where $q \in A_{v_2}$ or of the form $(u,f(u))$
where $u \in A_{v_1} \setminus \{a\}$. To complete the proof, it
suffices to observe that in $A'$ we use $q \in A_{v_2}$ to represent
the pair of assignments $(a,q)$ and $u \in A_{v_1} \setminus \{a\}$
to represent $(u,f(u))$.
\end{proof}

Fusion preserves solvability and the total number of assignments
decreases by at least 1 (in fact, by $|A_{v_2}|$ in the case of a
simple fusion). However, when solving instances $I \in \cspneg{P}$,
for some pattern $P$, a fusion operation will only be useful if it
does not introduce the forbidden pattern $P$.

\section{Reduction}

In a pattern $P$, a point $a$ which is linked by a single
compatibility edge to the rest of $P$ is known as a {\em dangling
point}. If an arc consistent instance $I$ does not contain the
pattern $P$ then it does not contain the pattern $P'$ which is
equivalent to $P$ in which the dangling point $a$ and the
corresponding compatibility edge have been deleted. Thus, since arc
consistency is a polynomial-time operation which cannot introduce a
forbidden pattern, to decide tractability we only need consider
patterns without dangling points.

\begin{definition}
We say that a pattern $P$ can be \emph{reduced} to a pattern $Q$, and
that $Q$ is a \emph{reduction} of $P$, if $Q$ is in the transitive
closure of the three operations extension, merging and dp-elimination
applied to $P$, where dp-elimination is the following operation:

\begin{description}
\item[dp-elimination] Eliminating a dangling point and its corresponding compatibility
edge from $P$ transforms $P$ into $Q$.
Example:

\hfill \scalebox{0.4}{\begin{tikzpicture} \node[scale=2.4] at
(0.5,-2) {$P$}; \draw[thick,color=black,style=dashed] (0,0)--(-4,3)
(0,0)--(3,3); \draw[thick,color=black] (-3,3)--(0,0);
\draw (0,0) circle (2 and 1);
\draw (-3.5,3) circle (1.5 and 1);
\draw (3,3) circle (1);
\end{tikzpicture}}
\hfill \scalebox{0.4}{\begin{tikzpicture} \node[scale=2.4] at
(0.5,-2) {$Q$}; \draw[thick,color=black,style=dashed] (0,0)--(-4,3)
(0,0)--(3,3);
\draw (0,0) circle (2 and 1);
\draw (-3.5,3) circle (1.5 and 1);
\draw (3,3) circle (1);
\end{tikzpicture}}
\hfill {}

\end{description}
\end{definition}

\begin{lemma}\label{recdef}
Let $P$ and $Q$ be two patterns, such that $P$ can be reduced to $Q$.
Let $I$ be a CSP instance satisfying arc consistency. If $Q$ occurs
in $I$, then $P$ also occurs in $I$.
\end{lemma}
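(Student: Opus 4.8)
The plan is to prove this by induction on the length of the reduction sequence from $P$ to $Q$. Since $Q$ is obtained from $P$ by a finite sequence of the three operations (extension, merging, dp-elimination), it suffices to handle a single application of each operation: if $P$ reduces to $Q$ in one step and $Q$ occurs in an arc consistent $I$, then $P$ occurs in $I$. The general statement then follows by composing these single-step implications along the reduction sequence, noting that $I$ remains fixed and arc consistent throughout.

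Let me sketch the single-step cases. For \textbf{extension}, $Q$ is an extension of $P$, so $P$ is a sub-pattern of $Q$; by Definition~\ref{def:occurs}, $P$ occurs in $Q$, and since occurrence is transitive (it is defined via a transitive closure) and $Q$ occurs in $I$, we immediately get that $P$ occurs in $I$. The \textbf{merging} case is analogous: if $Q$ is obtained from $P$ by merging two points, then $Q$ occurs in $P$ by definition, but I actually need the reverse direction here. The key observation is that merging and extension are precisely the two operations defining the occurrence relation, so whenever $P$ reduces to $Q$ via one of these, the occurrence relation is being traversed, and I must verify the directionality matches the definition of reduction (where $Q$ is the \emph{reduced} pattern obtained from $P$). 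I would check carefully that ``$P$ reduces to $Q$'' composed with ``$Q$ occurs in $I$'' yields ``$P$ occurs in $I$'' for each operation.

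The genuinely new case, and the one I expect to be the main obstacle, is \textbf{dp-elimination}. Here $Q$ is $P$ with a dangling point $a$ and its single compatibility edge $\{a,b\}$ removed. Suppose $Q$ occurs in $I$; I must show $P$ occurs in $I$. The idea is to recover the deleted dangling point using arc consistency: since $Q$ occurs in $I$, there is an image of $Q$ inside $I$, in particular an image $b'$ of the point $b$ to which $a$ was attached. Because $a$ is a dangling point, its only connection to the rest of $P$ is the compatibility edge $\{a,b\}$, and $a$ is assigned to some variable $v = var_P(a)$ distinct from the variables already used. I would exploit arc consistency of $I$: the point $b'$ (the image of $b$), being a surviving assignment in the arc consistent instance $I$, must have a compatible assignment in every other variable's domain. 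I can introduce a fresh variable of $I$ (or reuse structure) to find a point $a'$ compatible with $b'$, serving as the image of the dangling point $a$.

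The delicate part is ensuring the recovered point $a'$ genuinely extends the occurrence of $Q$ to an occurrence of $P$ without creating unwanted constraints: since $a$ has no other edges in $P$, the only requirement on $a'$ is that $cpt(a',b') = T$ and that $a'$ lies on a variable distinct from those used by the image of $Q$. Arc consistency guarantees such a compatible $a'$ exists for the variable $var(b')$'s partner, but I must confirm that $var_P(a)$ can be mapped to a variable of $I$ not already occupied, possibly invoking the extension operation to enlarge the image if necessary. I would argue that because occurrence permits both extension and merging, and because the dangling point imposes no compatibility constraints beyond the single edge, the arc-consistent support of $b'$ furnishes exactly the point needed, completing the reconstruction of an image of $P$ in $I$. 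Thus the induction closes, and the lemma follows.
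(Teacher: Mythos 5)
Your proposal follows essentially the same route as the paper's proof: induction on single reduction steps, with extension and merging handled via the definition of occurrence and its transitivity, and dp-elimination handled by using arc consistency of $I$ to supply a point compatible with the image of $b$, which then serves as the image of the deleted dangling point. The only slip is in the merging case: since $Q$ lies in the transitive closure of the merging operation applied to $P$, the definition of occurrence gives directly that $P$ occurs in $Q$ (your claim that ``$Q$ occurs in $P$ by definition'' has the direction reversed), and this is precisely the direction needed to compose with ``$Q$ occurs in $I$'', so the check you defer does go through.
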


\begin{proof}
By definition, reduction is a transitive relation. Therefore, by
induction, it suffices to prove the result for each of the individual
operations: extension, merging and dp-elimination. We suppose $Q$
occurs in $I$. If $Q$ is an extension of $P$, then $P$ is a
sub-pattern of $Q$ and the result is immediate. If merging two points
$a$ and $b$ in $P$ transforms it into $Q$, then $P$ actually covers
two different patterns: the one where $a$ and $b$ are different
points, and the one where $a$ and $b$ are the same point. The latter
pattern is $Q$. So the set of instances containing $Q$ is a subset of
the set of instances containing (at least one of the two versions of)
$P$ and we have the result. If adding
a dangling point and its corresponding compatibility edge to $Q$
transforms it into $P$, then since $I$ satisfies arc consistency $P$
also occurs in $I$.
\end{proof}

The following corollary follows immediately from the fact that arc
consistency can be established in polynomial time.

\begin{corollary} \label{cor:(in)tractable}
Let $P$ and $Q$ be two patterns, such that $P$ can be reduced to $Q$.
Then
\begin{itemize}
\item If $Q$ is tractable, then $P$ is tractable.
\item If $P$ is intractable, then $Q$ is intractable.
\end{itemize}
\end{corollary}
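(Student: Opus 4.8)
The plan is to derive both implications from Lemma~\ref{recdef} together with the fact, recorded in the preprocessing section, that establishing arc consistency is a polynomial-time operation which preserves solvability and cannot introduce a forbidden pattern. First I would state the contrapositive of Lemma~\ref{recdef}: if $I$ is an arc-consistent instance in which $P$ does \emph{not} occur, then $Q$ does not occur in $I$ either. In other words, restricted to arc-consistent instances, $\cspneg{P} \subseteq \cspneg{Q}$; this containment is the engine behind both bullets. To exploit it uniformly, I would then observe that any instance $I$ can be sent in polynomial time to the instance $I'$ obtained by establishing arc consistency, where $I'$ is solvable iff $I$ is. If moreover $I \in \cspneg{P}$, then since arc consistency cannot introduce $P$ we still have $I' \in \cspneg{P}$, and as $I'$ is arc consistent the contrapositive gives $I' \in \cspneg{Q}$. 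Thus $I \mapsto I'$ is a polynomial-time map carrying $\cspneg{P}$-instances to solvability-equivalent $\cspneg{Q}$-instances.

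For the first bullet, suppose $Q$ is tractable, so that some polynomial-time algorithm $\mathcal{A}$ decides solvability on $\cspneg{Q}$. To solve an arbitrary $I \in \cspneg{P}$, I would compute $I'$ as above and run $\mathcal{A}$ on $I'$: since $I' \in \cspneg{Q}$ the algorithm is applicable, and it reports the solvability of $I'$, which equals that of $I$. The whole procedure is polynomial, so $\cspneg{P}$ is tractable, i.e.\ $P$ is tractable.

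For the second bullet, suppose $P$ is intractable, so $\cspneg{P}$ is NP-complete and in particular NP-hard. The same map $I \mapsto I'$ is then precisely a polynomial-time many-one reduction from $\cspneg{P}$ to $\cspneg{Q}$, since it preserves the yes/no answer and lands in the class $\cspneg{Q}$; hence $\cspneg{Q}$ is NP-hard. As $\cspneg{Q}$ lies in NP (a claimed solution is a set of $|V|$ pairwise-compatible assignments, checkable in polynomial time), we conclude that $\cspneg{Q}$ is NP-complete, that is, $Q$ is intractable.

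I would stress that the two bullets are not mere logical contrapositives of each other, since \emph{tractable} and \emph{intractable} are not complementary notions, so each genuinely needs the reduction above; fortunately both follow once Lemma~\ref{recdef} and the polynomiality of arc consistency are in hand. There is accordingly no serious obstacle here, and the only point demanding care is verifying that establishing arc consistency simultaneously preserves solvability, preserves $\cspneg{P}$-membership, and produces an arc-consistent instance to which Lemma~\ref{recdef} can be applied.
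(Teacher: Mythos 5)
Your proposal is correct and follows essentially the same route as the paper, which simply asserts that the corollary ``follows immediately from the fact that arc consistency can be established in polynomial time'' in combination with Lemma~\ref{recdef}; you have merely spelled out the implicit details (the contrapositive of the lemma, preservation of solvability and of $\cspneg{P}$-membership under arc consistency, and membership of $\cspneg{Q}$ in NP). No gaps.
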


It follows that we only need to study those patterns that cannot be
reduced to a known tractable pattern and that are not the reduction
of a known intractable pattern.

\section{One-constraint patterns}

In this section we prove a dichotomy for patterns composed of a
single constraint. We also prove some results concerning 1-constraint
patterns that are essential for the proof of the 2-constraint
dichotomy given in Section~\ref{sec:2-constraint}.

\begin{lemma}\label{twored}
Let $P$ be a pattern such that a constraint in $P$ contains two
distinct incompatibility edges that cannot be merged. Then $P$ is
intractable.
\end{lemma}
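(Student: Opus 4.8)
The plan is to prove intractability by a polynomial reduction from a classical NP-complete problem into $\cspneg{P}$, organised around Corollary~\ref{cor:(in)tractable}: rather than argue about $P$ directly, I would first extract from $P$ a small \emph{core} sub-pattern and prove that this core is intractable. Since the core, being a sub-pattern of $P$, can be reduced to $P$ by the extension operation, Corollary~\ref{cor:(in)tractable} then transfers intractability of the core to $P$.

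To locate the core I would use the hypothesis directly. Let the two incompatibility edges lie in the constraint on variables $v_1,v_2$. A short case analysis on whether the two edges share an endpoint, combined with the definition of merging, shows that the assumption that they \emph{cannot} be merged forces the existence of a \emph{witness} point $c$: a point, on some variable other than the one carrying the pair of endpoints we would need to identify, that is compatible with one endpoint and incompatible with the other. Restricting $P$ to the two incompatibility edges together with this witness then yields one of a small number of minimal patterns. Up to the symmetries of a constraint (exchanging $v_1$ and $v_2$, and renaming points) I would show that these cores are the two-variable $2\times2$ patterns with compatibility matrix $\left(\begin{smallmatrix} F & F \\ T & F \end{smallmatrix}\right)$ or $\left(\begin{smallmatrix} F & T \\ T & F \end{smallmatrix}\right)$, together with the three-variable analogue in which $c$ sits on a third variable. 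Part of this step is to verify that these few cores really do exhaust the possibilities; in particular the case of two \emph{disjoint} incompatibility edges, which on its own can always be merged to a single edge, becomes unmergeable only by supplying exactly such a witness, and hence reduces to the same cores.

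It then remains to prove that forbidding each core is NP-complete. Membership in NP is immediate, since $\cspneg{P}$ is a class of CSP instances, so only NP-hardness requires work. For each core I would give a reduction from a standard NP-complete problem, with 3-SAT and graph $3$-colouring the natural candidates, encoding every variable and every clause (or every graph edge) by a gadget whose constraints are deliberately built to be structurally ``simpler'' than the core, exploiting the fact that forbidding a pattern imposes no bound on domain size.

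The main obstacle is this last point: I must guarantee that the assembled instance contains \emph{no} occurrence of the core, where occurrence is taken in the strong sense of Definition~\ref{def:occurs}, allowing arbitrary extension and merging of points across different gadgets. It is therefore not enough to inspect each constraint in isolation; I have to rule out the core arising after values from unrelated gadgets are identified. I would handle this by maintaining an invariant on the microstructure of every constraint of the constructed instance --- for instance that within each constraint all incompatibility edges merge to a single edge, or that no $2\times2$ submatrix realises the forbidden arrangement of $F$'s and $T$'s --- and then checking that this invariant is preserved under the point-merging permitted by Definition~\ref{def:occurs}, so that no embedding of the core can exist. Establishing simultaneously that the gadgets faithfully encode the source problem and that this invariant survives all admissible merges is the technical heart of the argument.
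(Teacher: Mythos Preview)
Your plan is not wrong in principle --- extracting a sub-pattern and invoking Corollary~\ref{cor:(in)tractable} is a legitimate route --- but it is a substantial detour compared with the paper, and your core enumeration is incomplete. In the case of two disjoint incompatibility edges $\{a_1,b_1\},\{a_2,b_2\}$ where the witness is one of the existing endpoints (say $cpt(a_2,b_1)=T$ with $\{a_1,b_2\}$ undefined), the restriction of $P$ yields the three-edge $2\times 2$ pattern $\left(\begin{smallmatrix}F & ?\\ T & F\end{smallmatrix}\right)$, which is neither of your two listed $2\times2$ cores nor a sub-pattern of $P$ extended to four edges; similar five-point cores arise when the witness is a fresh point of $A_{v_2}$. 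These are fixable, but they show that the case analysis is messier than you suggest.

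More importantly, the paper bypasses all of this. It gives a \emph{single} reduction, from SAT restricted to instances with at most one occurrence of each variable per clause, that works uniformly for every $P$ satisfying the hypothesis: one CSP variable per propositional variable (domain $\{v_i,\overline{v_i}\}$), one CSP variable per clause (domain = set of literals), and an incompatibility edge between each clause-literal and its negation. The resulting instance has \emph{at most one incompatibility edge in every constraint}, so no pattern with two non-mergeable incompatibility edges in a constraint can occur in it. This is precisely the invariant you name (``within each constraint all incompatibility edges merge to a single edge''), but the paper applies it directly to $P$ rather than to a family of cores; the reduction is immediate and the ``technical heart'' you anticipate --- verifying that merges across gadgets do not manufacture the pattern --- evaporates, because a single incompatibility edge per constraint can never support two non-mergeable ones, no matter what merges are performed. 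Your proposed per-core reductions from 3-SAT or 3-colouring would each have to rediscover essentially this same construction, so the passage through cores buys nothing.
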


\begin{proof}
Let $P$ be a pattern such that a constraint in $P$ contains two
non-mergeable incompatibility edges. Let SAT1 be the set of SAT
instances with at most one occurrence of each variable in each
clause. SAT1 is trivially equivalent to SAT which is well known to be
NP-complete~\cite{Cook}. It suffices to give a polynomial reduction
from SAT1 to \cspneg{P}. We suppose that we have a SAT1 instance
$I=\{V,S\}$ with $V$ a set of variables $\{v_1,v_2,\dots,v_n\}$ and
$S$ a set of clauses $\{C_1,C_2,\dots,C_k\}$ such that each clause
$C_i$ is a disjunction of $c_i$ literals $l_i^1\vee\dots\vee
l_i^{c_i}$. We create the following CSP instance $I'$:
\begin{itemize}
\item $n+k$ variables $v'_1,\dots,v'_{n+k}$.
\item $\forall v'_i$ with $1\leq i\leq n$, two points "$v_i$" and
"$\overline{v_i}$" in $A_{v'_i}$.
\item $\forall v'_i$ with $n+1\leq i\leq n+k$, $c_{i-n}$ points
$l_{i-n}^1,\dots,l_{i-n}^{c_{i-n}}$ in $A_{v'_i}$.
\item $\forall 1\leq i\leq k$, $\forall 1\leq j\leq c_i$,
an incompatibility edge between the point $l_i^j\in A_{v'_{n+i}}$ and
the occurrence in $A_{v'_1},\dots,A_{v'_n}$ of the literal
$\overline{l_i^j}$.
\end{itemize}
By construction, $I'$ has a solution if and only if $I$ has a
solution. Furthermore, each time an incompatibility edge occurs in a
constraint $C$, this constraint $C$ is between a CSP variable $v_i'$
representing the SAT1 variable $v_i$ and another CSP variable
$v'_{n+j}$ representing the SAT1 clause $C_j$. Since $v_i$ occurs at
most once in $C_j$, then there is only one incompatibility edge in
$C$. So $I'$ does not contain the pattern $P$. So we have reduced
SAT1 to \cspneg{P}.
\end{proof}

\begin{definition}
Given a pattern $P=\tuple{V,A,var,E,cpt}$, a variable $v \in V$, and
a point $a \in A_v$, we say that $a$ is \emph{explicitly compatible}
(respectively \emph{explicitly incompatible}) if there is a point $b
\in A$ such that $a$ is compatible with $b$ (respectively such that
$a$ is incompatible with $b$).
\end{definition}

\begin{lemma}\label{onegreenpoint}
Let $P$ be a non-mergeable pattern. Then for every variable $v$ in
$P$, there is at most one point in $A_v$ which is not explicitly
incompatible.
\end{lemma}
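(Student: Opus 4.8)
The plan is to argue by contradiction, exhibiting a mergeable pair whenever a variable has two points that are not explicitly incompatible, and thereby contradicting non-mergeability. The whole lemma rests on unwinding the definition of the merging operation from Definition~\ref{def:occurs}, so I would state carefully what that operation requires and then check each requirement against the hypothesis.

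First I would suppose, for contradiction, that for some variable $v$ there are two \emph{distinct} points $a,b \in A_v$ such that neither $a$ nor $b$ is explicitly incompatible. By the definition of explicit incompatibility, this means that no incompatibility edge is incident to $a$ and none is incident to $b$. Equivalently, every edge of $E$ touching $a$ or touching $b$ is a compatibility edge, so that $cpt(a,c)=T$ whenever $\{a,c\}\in E$, and $cpt(b,c)=T$ whenever $\{b,c\}\in E$. This observation is the only place where the hypothesis is used, and it is what drives the argument.

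Next I would verify the two clauses of the merging condition for the pair $a,b$. Since $a,b\in A_v$ we have $var(a)=var(b)$, so the variable-equality requirement holds immediately. For the compatibility-agreement requirement, consider any point $c$ with both $\{a,c\}\in E$ and $\{b,c\}\in E$; by the previous paragraph both of these are compatibility edges, hence $cpt(a,c)=T=cpt(b,c)$. Thus $a$ and $b$ satisfy the merging condition of Definition~\ref{def:occurs} and can be merged, contradicting the assumption that $P$ is non-mergeable. This contradiction yields the lemma.

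I do not expect a genuine obstacle here; the argument is short. The single point requiring care is the correct reading of the merging operation, namely that its compatibility clause quantifies only over common neighbours $c$ of $a$ and $b$ and is therefore vacuously satisfied when one of the two points has no incident edge at all (for instance an isolated point, which is neither explicitly compatible nor explicitly incompatible). Confirming that this edge case is covered, and that ``not explicitly incompatible'' forces all remaining incident edges to be compatibility edges, is all that is needed to close the proof.
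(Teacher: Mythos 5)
Your proof is correct and follows essentially the same route as the paper's: assume two such points exist, observe that neither has any incident incompatibility edge, verify the merging condition of Definition~\ref{def:occurs}, and derive a contradiction with non-mergeability. Your version merely spells out the quantification over common neighbours and the isolated-point edge case, which the paper's terser proof leaves implicit.
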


\begin{proof}
We suppose we have a pattern $P$ such that there are two points $a$
and $b$ with $var(a)=var(b)$ such that neither $a$ nor $b$ is
explicitly incompatible. So no point in the pattern is incompatible
with either $a$ or $b$. Hence, we can merge $a$ and $b$, which is a contradiction.
\end{proof}

Let $Z$ be the pattern on two variables $v$ and $v'$,
with points $a,b \in A_v$ and points $c,d \in A_{v'}$ such that
$a$ is compatible with both $c$ and $d$, $b$ is compatible with $c$
and incompatible with $d$.

\begin{lemma}\label{zed}
$Z$ is intractable.
\end{lemma}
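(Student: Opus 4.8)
The plan is to prove intractability by exhibiting, for every instance of a suitable NP-complete problem, an equisatisfiable CSP instance in which the pattern $Z$ does not occur; since \cspneg{Z} is clearly in NP, this gives NP-completeness. I would reduce from 3-SAT (assuming, as with SAT1 in the proof of Lemma~\ref{twored}, that each variable occurs at most once per clause). The essential difficulty, and the reason Lemma~\ref{twored} does not apply, is that $Z$ contains only a single incompatibility edge, so forbidding it is extremely restrictive.

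First I would characterise the relations that avoid $Z$. Reading $Z$ inside one constraint with rows $A_v$ and columns $A_{v'}$, a copy of $Z$ is exactly a choice of two rows $a,b$ and two columns $c,d$ whose $2\times2$ submatrix has three compatible entries and one incompatible entry. Writing $S_x$ for the set of points compatible with $x$, such a submatrix exists iff there are two rows with $S_a\cap S_b\neq\emptyset$ and $S_a\neq S_b$; hence a constraint avoids $Z$ iff $S_a\cap S_b\neq\emptyset\Rightarrow S_a=S_b$, i.e. the compatible pairs form a disjoint union of complete bipartite blocks sharing no row and no column, equivalently the relation has the \emph{agreement} form $\{(x,y)\mid f(x)=g(y)\}$. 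This criterion is symmetric in the two variables, so both orientations of $Z$ are handled at once, and it explains why the obvious gadgets are unusable: disequality, implication, and clause (OR) relations all violate the disjoint-or-equal condition and therefore contain $Z$. For the reduction it is in fact enough to check directly that each gadget I use is \emph{functional}, of the form $\{(x,y)\mid y=h(x)\}$: then every $S_x$ is a singleton and the columns $S_y=h^{-1}(y)$ are disjoint, so the criterion holds in both orientations and no $Z$ occurs.

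The main obstacle is thus to encode an NP-hard problem using only such functional constraints. The key idea is that auxiliary variables over \emph{product} domains let functional constraints simulate arbitrary Boolean gates while remaining $Z$-free. Concretely, I create one CSP variable per SAT variable $x_i$ with domain $\{0,1\}$, and for each clause $C=\ell_1\vee\ell_2\vee\ell_3$ one variable $w_C$ whose domain is the seven triples in $\{0,1\}^3\setminus\{000\}$. For each literal $\ell_j$ of $C$ on variable $x_i$ I add the constraint between $w_C$ and $x_i$ equating $x_i$ with the $j$-th coordinate of $w_C$ (or with its negation if $\ell_j=\overline{x_i}$); this is the graph of a function of $w_C$, hence functional and $Z$-free. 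All remaining pairs of variables are made fully compatible, a single block which is trivially $Z$-free. An assignment of the $x_i$ satisfies $C$ iff the induced triple of literal values is nonzero, which is exactly the condition that it be a legal value of $w_C$; consequently the CSP instance is solvable iff the 3-SAT instance is satisfiable, and since every constraint is functional the pattern $Z$ occurs in no constraint and hence not in the instance, so it lies in \cspneg{Z}. The one step needing care is verifying that forcing $w_C\neq000$ through the choice of its domain, together with the functional link constraints, captures ``clause $C$ is satisfied'' without introducing any incompatibility configuration beyond the functional ones; the product-domain construction guarantees precisely this.
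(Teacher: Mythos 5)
Your proof is correct, but it takes a genuinely different route from the paper's. The paper reduces from 3-\textsc{colouring}: it introduces the relations $R_{s,t}=\{\langle u,v\rangle \mid (u=s\wedge v=t)\vee(u\neq s\wedge v\neq t)\}$ on a three-element domain, observes that each is $Z$-free, and chains six of them through three auxiliary variables to simulate the constraint $v_i\neq v_j$ (which itself contains $Z$ once domains have size $3$). You instead reduce from 3-SAT, using one variable per clause ranging over the seven satisfying triples of literal values, tied to the Boolean variables by functional projection constraints. Your preliminary characterisation --- a constraint avoids $Z$ iff the neighbourhoods of its rows are pairwise disjoint or equal, i.e.\ the relation is a disjoint union of bicliques, equivalently of agreement form $f(x)=g(y)$ --- is accurate (and is legitimate here because $Z$ is non-mergeable and symmetric under exchanging its two variables, so occurrence of $Z$ in an instance is exactly the presence of a $2\times 2$ submatrix with three compatible entries and one incompatible entry in a single constraint). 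It is in fact the common core of both constructions: the paper's $R_{s,t}$ is precisely of agreement form with $f(u)=[u=s]$ and $g(v)=[v=t]$. Your version buys a reusable criterion and a gadget-free way of encoding arbitrary relations by moving to product-domain variables with functional links; the paper's version buys brevity, since it need only verify $Z$-freeness for one small family of relations. Two minor remarks: binary disequality on a \emph{two}-element domain is a bijection and hence $Z$-free, so your aside that disequality violates the criterion holds only for domains of size at least $3$ (this is not load-bearing); and the restriction to at most one occurrence of a variable per clause is genuinely needed in your construction, since otherwise two projection constraints would share the same scope --- you correctly impose it.
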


\begin{proof}
Since 3-{\sc colouring} is NP-complete~\cite{coloring}, it suffices
to give a polynomial reduction from 3-{\sc colouring} to
CSP($\overline{Z}$), the set of CSP instances in which the pattern
$Z$ does not occur.

Define the relation $R_{s,t} \subseteq \{1,2,3\}^2$ by
\[R_{s,t} \ = \ \{\langle u,v \rangle | (u=s \wedge v=t) \vee (u \neq s \wedge v \neq t)\}
\]
It is easy to verify that $R_{s,t}$ does not contain the pattern $Z$.
Consider the 5-variable gadget with variables $v_i,v_j,u_1,u_2,u_3$, each
with domain $\{1,2,3\}$, and with constraints $R_{k,k}$
on variables $(v_i,u_k)$ ($k=1,2,3$) and constraints $R_{1+(k \ {\rm mod} \ 3),k}$
on variables $(u_k,v_j)$ ($k=1,2,3$). The joint effect of these six
constraints is simply to impose the constraint $v_i \neq v_j$.
Any instance $\langle V,E \rangle$
of 3-{\sc colouring}, with $V=\{1,\ldots,n\}$, can be reduced to an instance of
CSP($\overline{Z}$) with variables $v_1,\ldots,v_n$
by placing a copy of this gadget between every pair of variables $(v_i,v_j)$ such that
$\{i,j\} \in E$. This reduction is clearly polynomial.
\end{proof}

Let $1I$ be the pattern on two variables $v$ and $v'$ with
points $a\in A_v$ and $b\in A_{v'}$ such that
$a$ and $b$ are incompatible.

\begin{lemma} \label{lem:1_constraint_dichotomy}
Let $P$ be a pattern on one constraint. Then either $P$ is reducible
to the trivial tractable pattern $1I$, and thus is tractable, or $P$
is intractable.
\end{lemma}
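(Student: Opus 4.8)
The plan is to run a case analysis on the incompatibility edges of the single constraint of $P$, showing in every case that $P$ is either intractable (via Lemma~\ref{twored} or Lemma~\ref{zed}) or reducible to $1I$. First I would dispose of the two extreme cases. If the constraint of $P$ contains two incompatibility edges that cannot be merged, Lemma~\ref{twored} gives intractability directly, so I may assume that any two incompatibility edges of $P$ are mergeable. At the other extreme, if $P$ has no incompatibility edge at all, then in a non-mergeable form of $P$ no point is explicitly incompatible, so Lemma~\ref{onegreenpoint} leaves at most one point per variable; that form is a single compatibility edge or smaller, and a single compatibility edge reduces to $1I$ (dp-eliminate a dangling endpoint to leave an isolated point, then extend by a fresh incompatibility edge). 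Since merging is itself a reduction, $P$ reduces to $1I$ and is tractable.

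The substantive case is when $P$ has, up to merging, exactly one incompatibility edge. Here I would prove the sharper dichotomy: $P$ is intractable precisely when the pattern $Z$ occurs in $P$, and $P$ reduces to $1I$ otherwise. The intractability direction is immediate: if $Z$ occurs in $P$ then $P$ lies in the closure of $Z$ under extension and merging, so $Z$ reduces to $P$, and Lemma~\ref{zed} together with Corollary~\ref{cor:(in)tractable} makes $P$ intractable.

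For the converse I would show that if $Z$ does not occur in $P$ then $P$ reduces to $1I$, arguing by induction on the number of points. Dangling points, and trivially isolated points, are stripped off by dp-elimination; since deleting a point can neither create a copy of $Z$ nor a second non-mergeable incompatibility edge, the induction hypothesis applies to the smaller pattern. When $P$ has no such removable point and is already non-mergeable, the single-incompatibility-edge hypothesis together with Lemma~\ref{onegreenpoint} bounds $P$ to at most two points per variable, and a direct enumeration shows the only irreducible possibilities are $1I$ and $Z$; as $Z$ is excluded, $P=1I$ and we are done. The remaining configuration, in which $P$ has no removable point yet is still mergeable, is the heart of the argument, and there I must exhibit a merge that again avoids creating a copy of $Z$ so that the induction can continue.

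That last step is where I expect the real difficulty. The subtlety is that merging is \emph{not} reversible, so the correct test for intractability is whether $Z$ \emph{occurs} in $P$, not whether $P$ can be \emph{reduced} to $Z$: a pattern obtained from $Z$ by splitting one of its points need not contain $Z$ and may in fact reduce all the way down to $1I$, so the naive strategy of merging to a non-mergeable form and reading off the answer is simply wrong. This forces the induction to prefer dp-elimination (which can never manufacture a $Z$) and to merge only when it is safe. The key technical lemma I would aim to establish is that in a pattern with no dangling point, no copy of $Z$, and a single incompatibility edge, the merge supplied by Lemma~\ref{onegreenpoint} is always safe, because any merge that did produce a $Z$ would force either a pre-existing dangling point or a pre-existing copy of $Z$, both of which are excluded. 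Proving this safe-merge lemma by a case analysis on how the two merged points attach to the two endpoints of the incompatibility edge is the main obstacle of the whole proof.
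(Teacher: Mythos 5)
Your handling of the two easy cases matches the paper's, and your observation that merging only transfers tractability \emph{downwards} (so that ``$P$ merges to $Z$'' does not by itself license the conclusion that $P$ is intractable) is a fair criticism of the paper's terse wording. But the dichotomy you propose for the main case --- $P$ is intractable precisely when $Z$ \emph{occurs} in $P$, and reduces to $1I$ otherwise --- is false, and with it the safe-merge lemma you yourself identify as the heart of the argument. Consider the one-constraint pattern $P''$ with $A_v=\{a,p_1,p_2\}$ and $A_{v'}=\{b,q_1,q_2\}$, whose only incompatibility edge is $\{a,b\}$ and whose compatibility edges are $\{a,q_1\},\{p_2,q_1\},\{p_2,q_2\},\{p_1,q_2\},\{p_1,b\}$ (an alternating six-cycle). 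The only point compatible with $b$ is $p_1$, the only point compatible with $a$ is $q_1$, and $\{p_1,q_1\}$ is undefined, so $Z$ does not occur in $P''$; every point has exactly two incident edges, so there is no dangling point; yet each of the four legal merges (for instance $p_1$ with $p_2$, which is exactly the merge supplied by Lemma~\ref{onegreenpoint}) yields a pattern containing $Z$. So $P''$ is a pattern with no dangling point, no copy of $Z$ and a single incompatibility edge in which \emph{no} merge is safe: your induction gets stuck, and must get stuck, because $P''$ is in fact intractable. A short case analysis shows that $P''$ does not occur in the relations $R_{s,t}$ of Lemma~\ref{zed} (any occurrence forces all six points onto the value fixed by the relation, contradicting the incompatibility of $a$ and $b$), so the same $3$-colouring reduction establishes that \cspneg{P''} is NP-hard; by Corollary~\ref{cor:(in)tractable}, $P''$ therefore cannot reduce to $1I$.

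The underlying error is that occurrence of an intractable sub-pattern is only a \emph{sufficient} condition for intractability: $\cspneg{P}$ can be NP-hard because some hard family of instances happens to avoid $P$, even though $P$ contains no intractable pattern as a sub-pattern --- and the gadget of Lemma~\ref{zed} avoids not only $Z$ but also larger patterns such as $P''$ that merely merge to $Z$. This is why the paper classifies by the fully merged, dangling-point-free form (which Lemma~\ref{onegreenpoint} bounds to at most two points per variable) and declares $P$ intractable when that form is $Z$; what that step really needs --- and what you would have to prove in place of your safe-merge lemma --- is that the $3$-colouring gadget avoids every dangling-point-free one-constraint pattern that merges to $Z$, not just $Z$ itself. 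Your example of a pattern that merges to $Z$ yet reduces to $1I$ is necessarily one with dangling points, which is precisely the case dp-elimination is there to dispose of first.
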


\begin{proof}
Let $P$ be a pattern on one constraint between two variables $v$ and
$v'$. From Lemma~\ref{twored}, we know that if $P$ has two distinct
incompatibility edges, then $P$ is intractable. If there is no
incompatibility edge at all in $P$, then $P$ is reducible by merging
and/or dp-elimination to the empty pattern, which is itself reducible
by sub-pattern to $1I$. We suppose there is exactly one
incompatibility edge in $P$. We label $a\in A_{v}$ and $b\in A_{v'}$
the points defining that edge. From Lemma~\ref{onegreenpoint}, we
know that we only need to consider at most one other point $c\neq a$
in $A_{v}$ and at most one other point $d\neq b$ in $A_{v'}$. If all
three edges $\{a,d\}$, $\{c,b\}$ and $\{c,d\}$ are compatibility
edges, then $P$ is intractable from Lemma~\ref{zed}. If only two or
less of these edges are compatibility edges, then $P$ is reducible by
merging and/or dp-elimination to $1I$. So we have the lemma.
\end{proof}

\begin{lemma} \label{lem:separate_patterns}
Let $P$ be a pattern composed of two separate one-constraint
patterns: $P_1$ on variables $v_0,v_1$ and $P_2$ on variables
$v_2,v_3$, where all four variables are distinct. Then
\begin{enumerate}
\item If either $P_1$ or $P_2$ is intractable, then $P$ is intractable too.
\item If both $P_1$ and $P_2$ are tractable, then $P$ is tractable.
\end{enumerate}
\end{lemma}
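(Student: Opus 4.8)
The plan is to treat the two claims separately; the first is immediate from the machinery already developed, while the second reduces to proving tractability of a single canonical pattern.

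For claim~1 I would simply observe that $P$ is an extension of each of $P_1$ and $P_2$. Indeed, since $P$ is the disjoint union of $P_1$ and $P_2$ on disjoint variable sets, $P_1$ is a sub-pattern of $P$, and hence $P_1$ reduces to $P$ by a single extension step (and symmetrically for $P_2$). Corollary~\ref{cor:(in)tractable} then applies directly: if $P_1$ or $P_2$ is intractable, so is $P$. No further argument is needed.

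For claim~2 the key idea is to reduce $P$ to one fixed tractable pattern. By Lemma~\ref{lem:1_constraint_dichotomy}, each tractable $P_i$ reduces to the one-edge pattern $1I$. I would argue that, because $P_1$ and $P_2$ share no variables and $P$ has no edges between them, every extension, merging or dp-elimination step witnessing the reduction of $P_1$ to $1I$ can be carried out inside the $P_1$-part of $P$ without disturbing $P_2$: a dangling point of $P_1$ remains dangling in $P$, and a legal merge in $P_1$ remains legal in $P$, since no point of $P_2$ is adjacent to $P_1$. Performing these reductions on each part in turn shows that $P$ reduces to the pattern $W$ consisting of two vertex-disjoint incompatibility edges on four distinct variables. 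By Corollary~\ref{cor:(in)tractable} it then suffices to prove that $W$ is tractable.

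To solve $\cspneg{W}$ I would first give a structural characterisation: $W$ occurs in an instance $I$ exactly when $I$ contains two incompatibility edges on disjoint pairs of variables, i.e.\ when the constraint graph of $I$ (whose edges are the non-trivial constraints) has a matching of size two. Thus $I \in \cspneg{W}$ iff that graph has matching number at most one, and a classical fact states that, apart from isolated vertices, such a graph is either a star or a triangle. In the star case all non-trivial constraints pass through a single variable $w^*$, so $I$ is solvable iff some value $a \in A_{w^*}$ has a compatible support in every other domain; this is decidable in polynomial time by testing each value of $w^*$. In the triangle case all non-trivial constraints lie among three variables, every other variable being unconstrained, so solvability is decided by brute force over the (at most cubic in domain size) value triples for those three variables. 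Both cases are polynomial, so $W$, and hence $P$, is tractable.

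The algorithmic cases are routine; the steps needing the most care are verifying that the reduction of each $P_i$ to $1I$ genuinely lifts to a reduction of the whole pattern $P$, that the two resulting copies of $1I$ remain on four distinct variables, and the graph-theoretic fact that a graph with no two independent edges is a star or a triangle.
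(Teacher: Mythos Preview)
Your argument for claim~1 is identical to the paper's.

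For claim~2 you take a genuinely different route. The paper argues in one line: given $I \in \cspneg{P}$, it asserts that either $P_1$ or $P_2$ fails to occur in $I$, and then applies the corresponding polynomial algorithm. You instead reduce $P$ to the concrete pattern $2I$ (the paper's name for your $W$) by lifting the per-component reductions supplied by Lemma~\ref{lem:1_constraint_dichotomy}, and then prove $\cspneg{2I}$ tractable directly via the matching-number analysis of the constraint graph.

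What each approach buys: the paper's argument is much shorter and is morally a general statement about disjoint unions of tractable patterns. Its key implication, however, is not literally correct as stated---an instance with a single non-trivial constraint contains $1I$ (hence both $P_1$ and $P_2$ when they equal $1I$) yet does not contain $2I$, so ``$P$ absent $\Rightarrow$ $P_1$ or $P_2$ absent'' fails; a small patch (handle instances whose non-trivial constraints do not span four distinct variables by brute force) is needed. Your route sidesteps this entirely, gives an explicit algorithm for $\cspneg{2I}$, and incidentally supplies the tractability of $2I$ that the main theorem's $\Leftarrow$ direction relies on but never proves separately. The price is a longer argument and the bookkeeping you already flag: checking that each merging/dp-elimination step in the reduction of $P_i$ remains valid inside $P$ (it does, since no point of $P_{3-i}$ is adjacent to $P_i$), and ensuring that any extension steps introduce fresh variables so that the two resulting copies of $1I$ sit on four distinct variables.
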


\begin{proof}
\begin{enumerate}
\item $P_1$ and $P_2$ are sub-patterns of $P$, so they are both reducible to $P$.
So if one of them is intractable, then $P$ is intractable too, by
Corollary~\ref{cor:(in)tractable}.
\item Suppose that both $P_1$ and $P_2$ are tractable. So there are two polynomial
algorithms $A_1$ and $A_2$ which solve \cspneg{P_1} and \cspneg{P_2},
respectively. Let $I$ be a CSP instance such that $P$ does not occur
in $I$. So either $P_1$ or $P_2$ does not occur in $I$. So $I$ can be
solved by either $A_1$ or $A_2$. So any CSP instance in \cspneg{P}
can be solved by one of two polynomial algorithms. So $P$ is
tractable.
\end{enumerate}
\end{proof}

The following lemma concerns a pattern in which some structure is
imposed on domain elements. It is essential for our two-constraint
dichotomy.

Let $2V$ be pattern on three variables $v_0$, $v_1$ and $v_2$ with
three points $a,b,c \in A_{v_1}$, three points $d,e,f \in A_{v_2}$
and six points $g,h,i,j,k,l \in A_{v_0}$, such that $a$ is compatible
with $h$, $b$ is compatible with $g$ and $h$, $c$ is incompatible
with $i$, $d$ is incompatible with $j$, $e$ is compatible with $k$
and $l$, $f$ is compatible with $l$. The pattern $2V$ also has the
associated structure ($a\neq b$ or $g\neq h$) and ($e\neq f$ or
$k\neq l$). When a pattern has an associated structure given by a
property $\mathcal{P}$, the property $\mathcal{P}$ must be preserved
by reduction operations. For example, if $\mathcal{P}$ is $a\neq b$
then the points $a$ and $b$ cannot be merged during a reduction. It
is worth pointing out that in a CSP instance, all points are assumed
to be distinct and hence a property such as $a \neq b$ is necessarily
satisfied.

\begin{lemma}\label{twovee}
$2V$ is intractable.
\end{lemma}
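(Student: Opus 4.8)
The plan is to exhibit a polynomial reduction from 3-{\sc colouring} to \cspneg{2V}, much as in Lemma~\ref{zed}. The heart of the argument is a structural characterisation of when $2V$ occurs in an instance. Call a constraint of a CSP instance \emph{dense} if (i) its compatibility graph is not a partial matching, i.e.\ some point is compatible with at least two points of the other variable, and (ii) it contains at least one incompatibility edge. I claim that $2V$ occurs in an instance $I$ if and only if some variable $v_0$ of $I$ is incident to two distinct dense constraints. For the direction needed here, suppose $2V$ occurs; then the images of the compatibility edges $\{a,h\},\{b,g\},\{b,h\}$ and of the incompatibility edge $\{c,i\}$ all lie in the single constraint between the images of $v_1$ and $v_0$ (recall that merging never identifies points of distinct variables, so $v_0,v_1,v_2$ map to three distinct variables of $I$). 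The structure condition $a\neq b$ or $g\neq h$, together with the fact that all points of a CSP instance are distinct, guarantees that this constraint has a point of degree at least $2$ in its compatibility graph, and the edge $\{c,i\}$ supplies an incompatibility; hence this constraint is dense, and symmetrically so is the constraint between the images of $v_2$ and $v_0$. Conversely, a point of degree $\geq 2$ yields, after merging $a$ with $b$ (or $g$ with $h$), the full pattern, so the equivalence holds.

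Given this characterisation, the reduction is short. On a domain of size $3$, the disequality constraint $x\neq y$ is dense (every value is compatible with the two others and incompatible with itself), whereas the equality constraint $x=y$ is \emph{not} dense, since its compatibility graph is a matching and so fails condition (i), despite having incompatibility edges. So, given a 3-{\sc colouring} instance $G=\tuple{V,E}$, I would create, for each vertex $u$, one copy of $u$ for every edge incident to $u$, each copy having domain $\{1,2,3\}$; chain the copies of $u$ together with equality constraints; and, for each edge $\{u,w\}\in E$, place a single $x\neq y$ constraint between the copy of $u$ and the copy of $w$ associated with that edge. The equality constraints force all copies of a vertex to take the same value, so the resulting instance is solvable iff $G$ is $3$-colourable, and its size is linear in $|E|$.

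It remains to check that every instance produced lies in \cspneg{2V}. By construction each copy is incident to exactly one disequality constraint, every other constraint it participates in being an equality (or trivial) constraint, which is not dense. Thus no variable is incident to two dense constraints, and by the characterisation above $2V$ does not occur. I expect the main obstacle to be the characterisation itself: one must verify that \emph{every} way of mapping the three compatibility edges of a ``V'' into a constraint --- including all the degenerate cases produced by the merging operation of Definition~\ref{def:occurs} --- forces a compatibility graph that is not a matching, while the associated structure condition is always recoverable precisely because distinct instance points cannot be merged away.
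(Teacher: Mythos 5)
Your proof is correct and follows essentially the same route as the paper's: the paper likewise observes that an occurrence of $2V$ forces a point compatible with two distinct points (its gadget $V^{+}$) into each of the two constraints incident to the central variable, and that equality constraints, being matchings, cannot contain such a point. The only differences are cosmetic --- the paper reduces from general CSP by splitting each non-trivial constraint off onto fresh equality-linked copies of its two variables, whereas you reduce from 3-colouring by splitting each vertex into equality-chained copies, one per incident edge (and your extra ``density'' condition requiring an incompatibility edge is never actually used, since the matching property alone already excludes $V^{+}$ from equality constraints).
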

\begin{proof}

Let the gadget $V^{+}$ be the pattern on two variables $v_0,v_1$ with
points $a \in A_{v_0}$ and $b,c \in A_{v_1}$ such that $a$ is
compatible with both $b$ and $c$, together with the structure $b\neq
c$. In the pattern $2V$, either $b$ is compatible with two different
points $g$ and $h$, or $h$ is compatible with two different points
$a$ and $b$. So, if $2V$ occurs in a CSP instance on variables
$v'_0,v'_1,v'_2$, then the gadget $V^{+}$ necessarily occurs in the
constraint between $v'_0$ and $v'_1$. By an identical argument, the
gadget $V^{+}$ must also occur in the constraint between $v'_0$ and
$v'_2$.

We define an equality constraint between two variables $v$ and $v'$
with the same domain as the constraint
consisting of compatibility edges between identical values in $A_v$ and
$A_{v'}$ and incompatibility edges between all
other couples of points. Thus, by definition, a point in an equality
constraint is compatible with one and only one point. Since the gadget
$V^{+}$ contains a point $a$ compatible with two different points, $V^{+}$
does not occur in an equality constraint.

We will reduce CSP to \cspneg{2V}. Let $I$ be a CSP instance. For
each $(v,w)$ in $I$ such that there is a non-trivial constraint
between $v$ and $w$, we introduce two new variables $v'$ and $w'$
such that the domain of $v'$ is the same as the domain of $v$, the
domain of $w'$ is the same as the domain of $w$. We add equality
constraints between $v$ and $v'$, and between $w$ and $w'$, and we
add between $v'$ and $w'$ the same constraint as there was between
$v$ and $w$. All other constraints involving $v'$ or $w'$ are
trivial. We also replace the constraint between $v$ and $w$ by a
trivial constraint. After this transformation, $v$ and $w'$ are the
only variables which share a constraint with $v'$. Let $I'$ be the
instance obtained after all such transformations have been performed
on $I$. By construction, $I'$ has a solution if and only $I$ has a
solution.

We now suppose that we have three variables $v_0$, $v_1$ and $v_2$ in
$I'$ such that there are non-trivial constraints between $v_0$ and
$v_1$ and between $v_0$ and $v_2$. By construction, at least one of
these constraints is an equality constraint. Hence, the gadget
$V^{+}$ cannot occur in both of these constraints. It follows that
$2V$ cannot occur in $I'$. So we have reduced $I$ to an instance
without any occurrence of the pattern $2V$. This polynomial reduction
from CSP to \cspneg{2V} shows that $2V$ is intractable.
\end{proof}

\section{Two-Constraint patterns}
\label{sec:2-constraint}

Let $T$ be the following set $\{T_1,T_2,T_3,T_4,T_5\}$:

\begin{center}
\begin{tikzpicture}[scale=.45]
\node at (4,-2) {$T_1$};
\draw (0.5,4) circle (1.5 and 1);
\draw (7.5,4) circle (1.5 and 1);
\draw (4,0.5) circle (1 and 1.5);
\draw[style=dashed] (1,4)--(4,1)--(7,4);
\draw (1,4)--(4,0)--(7,4);
\end{tikzpicture}
\begin{tikzpicture}[scale=.45]
\node at (4,-2) {$T_2$};
\draw (0.5,4) circle (1.5 and 1);
\draw (7.5,4) circle (1.5 and 1);
\draw (4,0.5) circle (1 and 1.5);
\draw[style=dashed] (1,4)--(4,1)--(7,4);
\draw (4,1)--(0,4)--(4,0)--(7,4);
\end{tikzpicture}
\begin{tikzpicture}[scale=.45]
\node at (5,-1) {$T_3$};
\draw (1.5,4) circle (1.5 and 1);
\draw (8.5,4) circle (1.5 and 1);
\draw (5,1) circle (2 and 1);
\draw[style=dashed] (2,4)--(4,1)
(6,1)--(8,4);
\draw (8,4)--(4,1)--(1,4)--(6,1);
\end{tikzpicture}
\begin{tikzpicture}[scale=.45]
\node at (5,-1) {$T_4$};
\draw (1.5,4) circle (1.5 and 1);
\draw (8.5,4) circle (1.5 and 1);
\draw (5,1) circle (2 and 1);
\draw[style=dashed] (2,4)--(4,1)
(6,1)--(8,4);
\draw (8,4)--(5,1)--(2,4)--(6,1);
\end{tikzpicture}
\begin{tikzpicture}[scale=.45]
\node at (5,-1) {$T_5$};
\draw (1.5,4) circle (1.5 and 1);
\draw (8.5,4) circle (1.5 and 1);
\draw (5,1) circle (2 and 1);
\draw[style=dashed] (2,4)--(4,1)
(6,1)--(8,4);
\draw (4,1)--(8,4)
(6,1)--(2,4);
\end{tikzpicture}
\end{center}

No pattern in $T$ can be reduced to a different pattern in $T$. As we
will show, each $T_i$ defines a tractable class of binary CSP
instances. For example, $T_4$ defines a class of instances which
includes as a proper subset all instances with zero-one-all
constraints~\cite{zoa}. Zero-one-all constraints can be seen as a
generalisation of 2SAT clauses to multi-valued logics.

Let $2I$ represent the pattern composed of two separate copies of
$1I$, i.e. four points $a,b,c,d$ such that $var(a)$, $var(b)$,
$var(c)$, $var(d)$ are all distinct and both $a,b$ and $c,d$ are
pairs of incompatible points.

\begin{theorem}
Let $P$ be a pattern on two constraints. Then $P$ is tractable if and
only if $P$ is reducible to one of the patterns in $T \cup \{2I\}$.
\end{theorem}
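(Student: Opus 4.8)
The plan is to prove the two directions separately, in each case moving (in)tractability along reductions via Corollary~\ref{cor:(in)tractable}. For the \emph{if} direction it suffices to show that every pattern in $T\cup\{2I\}$ is itself tractable. The pattern $2I$ is immediate: it consists of two separate copies of $1I$ on four distinct variables, $1I$ is tractable by Lemma~\ref{lem:1_constraint_dichotomy}, and so $2I$ is tractable by the second part of Lemma~\ref{lem:separate_patterns}. The real content is to exhibit, for each of $T_1,\ldots,T_5$, a polynomial-time algorithm solving $\cspneg{T_i}$. I would treat these five patterns one at a time, in each case combining the preprocessing operations introduced earlier (arc consistency, neighbourhood substitution, and especially simple and complex fusion) with a correctness argument showing both that these operations never reintroduce the forbidden pattern $T_i$ on an instance of $\cspneg{T_i}$, and that after exhaustive simplification the instance is either trivially decided or collapses into an already-understood tractable class. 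The remark that $T_4$ subsumes the zero-one-all constraints, and the abstract's mention of a generalisation of 2SAT, suggest that at least one $T_i$ reduces, once fusion has shrunk the domains, to a 2SAT-style propagation.

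For the \emph{only if} direction I would argue the contrapositive: if a two-constraint pattern $P$ is not reducible to any pattern in $T\cup\{2I\}$, then $P$ is intractable. First I normalise $P$; since reduction preserves tractability, I may assume $P$ is non-mergeable and has no dangling points (otherwise replace $P$ by its reduced form). By Lemma~\ref{twored} I may further assume that neither constraint contains two non-mergeable incompatibility edges, so each constraint has at most one incompatibility edge up to merging; and by Lemma~\ref{onegreenpoint} each variable carries at most one point that is not explicitly incompatible. These bounds force $P$ to have bounded size, so the remainder is a finite case check. I then split on whether the two constraints are variable-disjoint or share a variable. In the disjoint case $P=P_1\sqcup P_2$ is a disjoint union of one-constraint patterns and Lemma~\ref{lem:separate_patterns} applies: if either $P_i$ is intractable we are done, while if both are tractable then each $P_i$ reduces to $1I$ by Lemma~\ref{lem:1_constraint_dichotomy}, and applying these reductions componentwise on the disjoint variable sets shows $P$ reduces to $2I$, contradicting the hypothesis.

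The substantial case is the \emph{connected} one, in which the two constraints share a single variable $v_0$ and act on $v_0,v_1$ and $v_0,v_2$. Here I would carry out an exhaustive analysis of the bounded-size configurations of compatibility and incompatibility edges, using the intractable building blocks already in hand. Whenever the constraint on $v_0,v_1$ or on $v_0,v_2$ contains $Z$ as a sub-pattern, so that $Z$ reduces to $P$, intractability follows from Lemma~\ref{zed}; and whenever the shared variable $v_0$ forces the branching structure of $2V$ across the two constraints (with its recorded $\neq$ structure respected), intractability follows from Lemma~\ref{twovee}. The objective of the case analysis is to show that the only normalised configurations escaping all three intractability lemmas (Lemmas~\ref{twored}, \ref{zed} and \ref{twovee}) are exactly those reducible to one of $T_1,\ldots,T_5$, completing the dichotomy.

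The main obstacle is this connected case, and two difficulties stand out. First, exhaustiveness: I must verify that the finite catalogue of normalised connected patterns is fully covered, and that $T_1,\ldots,T_5$ together with $Z$ and $2V$ account for every configuration. This is delicate precisely because $v_0$ is constrained on both sides, so it can carry more relevant points than $v_1$ or $v_2$ and is not pinned down by a single application of Lemma~\ref{onegreenpoint}. Second, for each surviving tractable configuration I must exhibit an explicit reduction to one of the $T_i$ and check that any associated $\neq$ structure is preserved by that reduction. Establishing the five tractability algorithms of the \emph{if} direction is the other demanding component, since each requires its own proof that fusion does not reintroduce the forbidden pattern.
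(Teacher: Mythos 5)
Your outline follows the same route as the paper: the same normalisation via Lemma~\ref{twored} and Lemma~\ref{onegreenpoint}, the same split into the four-variable (disjoint) case handled by Lemmas~\ref{lem:separate_patterns} and~\ref{lem:1_constraint_dichotomy} and the three-variable (connected) case handled by exhaustive analysis against the intractable gadgets $Z$ and $2V$, and the same division of labour for the converse. The disjoint case and the overall logic are correct as you state them. However, what you have written is a roadmap rather than a proof, and the two components you yourself flag as ``the main obstacle'' and ``the other demanding component'' are precisely where all of the mathematical content lies, so they cannot be deferred. In the connected case you do not actually enumerate the configurations: the analysis must first distinguish the two incompatibility skeletons (the two incompatibility edges either share their endpoint in the middle variable $v_0$ or do not), and then for each skeleton argue edge by edge which compatibility edges are forced (to avoid mergeability or dangling points) and which are forbidden (to avoid introducing $Z$ or $2V$), using the symmetries to cut the case count. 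Without this, the claim that $T_1,\ldots,T_5$ is the complete list of survivors is unverified.

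The larger gap is the \emph{if} direction. Tractability of $T_1,\ldots,T_5$ is not a routine application of the preprocessing operations; each pattern needs its own bespoke argument. In the paper, $T_1$ alone requires developing a theory of domination orders on domains (Lemmas~\ref{triord} through~\ref{twotwo}), a classification of variables as one-winner or one-loser, a sequence of simple and complex fusions each shown not to reintroduce $T_1$ via Lemma~\ref{lem:T1notintroduced}, and finally a reduction to a non-overlapping clause problem (NOOSAT) solvable by known results on hierarchically nested convex VCSPs. $T_2$ requires showing that all non-trivial constraints in the relevant connected components become functional; $T_3$ and $T_4$ reduce, after point eliminations justified by gadget arguments, to bijective and zero-one-all constraints respectively; $T_5$ is disposed of in one line as a sub-pattern of the broken-triangle pattern. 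Your guesses that fusion is the right tool and that a zero-one-all/2SAT-style class is lurking behind one of the patterns are accurate, but none of the five correctness arguments is present, and for $T_1$ in particular there is no obvious shortcut. As it stands the proposal establishes neither direction of the theorem.
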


\begin{proof}
$\Rightarrow$: \ \ A two-constraint pattern involves either three or
four distinct variables. Consider first the latter case, in which $P$
is composed of two separate one-constraint patterns $P_1$ and $P_2$
on four distinct variables. By Lemma~\ref{lem:separate_patterns}, $P$
is tractable if and only if both $P_1$ and $P_2$ are tractable.
Furthermore, by Lemma~\ref{lem:1_constraint_dichotomy}, all tractable
one-constraint patterns are reducible to $1I$. Thus, if $P$ is
tractable, then it is reducible to $2I$, by a combination of the two
reductions of $P_1$ and $P_2$ to $1I$. It only remains to study
two-constraint patterns on \emph{three} variables.

From Lemma~\ref{twored}, we know that we only have to study patterns
with at most one incompatibility edge in each constraint. If one of
the constraints does not contain any incompatibility edge at all,
then the pattern is reducible by merging and/or dp-elimination to a
pattern with only one constraint. So we can assume from now on that
there is exactly one incompatibility edge $(a\in A_{v_0},b\in
A_{v_1})$ between $v_0$ and $v_1$, and also exactly one
incompatibility edge $(c\in A_{v_0},d\in A_{v_2})$ between $v_0$ and
$v_2$. The ``skeleton'' of incompatibility edges of an irreducible
tractable pattern can thus take two forms according to whether $a=c$
or $a \neq c$.

From Lemma~\ref{onegreenpoint} we know that $|A_v| \leq 2$ for each
variable $v$ with only one explicitly incompatible point, and that
$|A_v| \leq 3$ for each variable $v$ with two explicitly incompatible
points.
We know from Lemmas~\ref{zed} and~\ref{twovee} that both $Z$ and $2V$
are intractable, so we must look for patterns in which neither one
occurs. We know that we have two possible incompatibility skeletons
to study, each one implying a maximum number of points appearing in
the pattern.

First incompatibility skeleton:
\begin{center}
\begin{tikzpicture}[scale=.5]
\node at (0,4) {$a$};
\node at (1,4.3) {$b$};
\node at (7,4.3) {$c$};
\node at (8,4) {$d$};
\node at (4,1.3) {$e$};
\node at (4,0) {$f$};
\draw (0.5,4) circle (1.5 and 1);
\draw (7.5,4) circle (1.5 and 1);
\draw (4,0.5) circle (1 and 1.5);
\draw[style=dashed] (1,4)--(4,1)--(7,4);
\end{tikzpicture}
\end{center}

Suppose first that $a$ is a point in the pattern. Then there must be
a compatibility edge between $a$ and $e$, otherwise we could merge
$a$ and $b$. There also must be a compatibility edge between $a$ and
$f$, otherwise $a$ would be a dangling point. Similarly, if $d$ is a
point in the pattern, then there must be compatibility edges between
$d$ and $e$, and between $d$ and $f$. So if both $a$ and $d$ are
points in the pattern, then the pattern $2V$ occurs. So $a$ and $d$
cannot be both points of the pattern. Since they play symmetric
roles, we only have two cases to consider: either $a$ is a point in
the pattern and not $d$, or neither $a$ nor $d$ is a point in the
pattern.

If $a$ is a point in the pattern, then the only remaining edges are
$\{f,b\}$ and $\{f,c\}$. $\{f,b\}$ cannot be a compatibility edge,
because otherwise the pattern $Z$ would occur. $\{f,c\}$ must be a
compatibility edge, otherwise we could merge $f$ and $e$. On the
other hand, if neither $a$ nor $d$ is a point in the pattern, then
the only remaining edges are $\{f,b\}$ and $\{f,c\}$. If one of them
is a compatibility edge but not the other, then $f$ would be a
dangling point. So either both $\{f,b\}$ and $\{f,c\}$ are
compatibility edges, or neither of them is. However, the latter case
is reducible to the former one. So the only possible irreducible
tractable patterns are $T_1$ and $T_2$.

Second incompatibility skeleton:

\begin{center}
\begin{tikzpicture}[scale=.5]
\node at (0,4) {$a$};
\node at (1,4.3) {$b$};
\node at (9,4.3) {$c$};
\node at (10,4) {$d$};
\node at (4,0.7) {$e$};
\node at (6,0.7) {$f$};
\node at (5,1) {$g$};
\draw (0.5,4) circle (1.5 and 1);
\draw (9.5,4) circle (1.5 and 1);
\draw (5,1) circle (2 and 1);
\draw[style=dashed] (1,4)--(4,1)
(6,1)--(9,4);
\end{tikzpicture}
\end{center}

If $g$ is a point in the pattern, then there must be a compatibility
edge between $g$ and $b$, otherwise we could merge $g$ and $e$. There
also must be a compatibility edge between $g$ and $c$, otherwise we
could merge $g$ and $f$. We suppose $a$ is a point in the pattern.
Then there is a compatibility edge between $a$ and $e$, otherwise we
could merge $a$ and $b$. There is also a compatibility edge either
between $a$ and $f$ or between $a$ and $g$, otherwise $a$ would be a
dangling point. We cannot have a compatibility edge between $a$ and
$g$, otherwise the pattern $Z$ would occur. So there is a
compatibility edge between $a$ and $f$. There is a compatibility edge
either between $b$ and $f$ or between $c$ and $e$, otherwise we could
merge $e$ and $f$. We cannot have a compatibility edge between $b$
and $f$, otherwise the pattern $Z$ would occur. We cannot have a
compatibility edge between $c$ and $e$, otherwise the pattern $2V$
would occur. So $a$ cannot be a point in the pattern. Since $a$ and
$d$ play symmetric roles, we can also deduce that $d$ cannot be a
point in the pattern. So the only remaining edges are $\{b,f\}$ and
$\{c,e\}$. At least one of them is a compatibility edge, otherwise we
could merge $e$ and $f$. If both of them are compatibility edges, the
pattern $2V$ occurs. So exactly one of them is a compatibility edge.
Since they play symmetric roles, we can assume for instance that
$\{b,f\}$ is a compatibility edge while $\{c,e\}$ is an unknown edge.

If $g$ is not a point in the pattern, then we suppose that $a$ is a
point in the pattern. There is a compatibility edge between $a$ and
$e$, otherwise we could merge $a$ and $b$. There is also a
compatibility edge between $a$ and $f$, otherwise $a$ would be a
dangling point. Similarly, if $d$ is a point in the pattern, then
there must be compatibility edges between $d$ and $e$, and between
$d$ and $f$. So $a$ and $d$ cannot be both points of the pattern.
Since they play symmetric roles, we only have two cases to consider:
either $a$ is a point in the pattern and not $d$, or neither $a$ nor
$d$ is a point in the pattern.

If $a$ is a point in the pattern, then the only remaining edges are
$\{b,f\}$ and $\{c,e\}$. At least one of them is a compatibility
edge, otherwise we could merge $e$ and $f$. There is no compatibility
edge between $b$ and $f$, otherwise the pattern $Z$ would occur. So
there is a compatibility edge between $c$ and $e$.

If neither $a$ nor $d$ is a point in the pattern, then the only
remaining edges are $\{b,f\}$ and $\{c,e\}$. At least one of them is
a compatibility edge, otherwise we could merge $e$ and $f$. So either
exactly one of them is a compatibility edge, or they both are.
However, the former case is reducible to the latter. So the only
possible patterns are $T_3$, $T_4$ or $T_5$.

So if $P$ is a tractable pattern on two constraints, then $P$ is
reducible to one of the patterns in $T$.

$\Leftarrow$: \ \ We now give the tractability proofs for all
patterns in $T$. We assume throughout that we have applied until
convergence the preprocessing operations: arc consistency,
neighbourhood substitution and single-valued variable elimination.

\paragraph{Proof of tractability of $T_1:$}
We suppose we forbid the pattern $T_1$. Let the gadget $X$ be the pattern
on two variables $v_0,v_1$ with points
$a,b \in A_{v_0}$ and $c,d \in A_{v_1}$ such that
$a$ is incompatible with $c$ and compatible with $d$, and $b$ is
compatible with $c$ and incompatible with $d$.

Suppose that the gadget $X$ occurs in an instance. Suppose $a$ is in a solution
$S$. Let $e\in A_{v_2}$ be such that $v_2\neq v_0$, $v_2\neq v_1$ and
$e\in S$. Let $f$ be the point of $S$ in $v_1$.

If $b$ is incompatible with $e$ then $a$, $b$, $d$ and $e$ form the
forbidden pattern. So $b$ is compatible with $e$. Similarly, if $c$
is incompatible with $e$, then $a$, $c$, $f$ and $e$ form the
forbidden pattern. So $c$ is compatible with $e$. So if we replace
$a$ by $b$ and $f$ by $c$ in $S$, then we have another solution. So
if $a$ is in a solution, then $b$ is also in a solution. So we can
remove $a$ while preserving the solvability of the instance.

So we can assume from now on that the gadget $X$ doesn't occur in
the instance. The following lemma indicates when we can perform fusion
operations.

\begin{lemma} \label{lem:T1notintroduced}
Consider a (simple or complex) fusion of two variables $v,v'$ in an
instance in \cspneg{T_1}. Suppose that whenever $(a,a')$ and $(b,b')$
are pairs of fused points during this fusion, such that $a \neq b \in
A_v$ and $a' \neq b' \in A_{v'}$, then either $a$ and $b'$ are
incompatible or $b$ and $a'$ are incompatible. Then the pattern $T_1$
cannot be introduced by this fusion.
\end{lemma}

\begin{proof}
By the definition of (simple or complex) fusion, the only way that
$T_1$ could be introduced is when the two points in the central
variable of $T_1$ are created by the fusion of pairs of points
$(a,a')$ and $(b,b')$ such that the compatibility of the points $a,b
\in A_v$ and $a',b' \in A_{v'}$ with the two other points $a_1$,
$a_2$ of $T_1$ are as shown:
\begin{center}
\begin{tikzpicture}
\node at (-1.3,6) {$A_{v_1}$}; \node at (11.3,6) {$A_{v_2}$}; \node at
(3,1.2) {$A_v$}; \node at (7,1.2) {$A_{v'}$}; \node at (-0.2,6) {$a_1$};
\node at (10.2,6) {$a_2$}; \node at (3,3.7) {$a$}; \node at (7,3.7)
{$a'$}; \node at (3,2.3) {$b$}; \node at (7,2.3) {$b'$};
\draw[style=dashed] (0,6)--(3,3.5) (7,3.5)--(10,6); \draw[]
(0,6)--(3,2.5) (0,6)--(7,2.5) (3,3.5)--(7,3.5) (3,2.5)--(7,2.5)
(7,2.5)--(10,6) (3,2.5)--(10,6); \draw (3,3) circle (1 and 1.5);
\draw (7,3) circle (1 and 1.5); \draw (0,6) circle (1); \draw (10,6)
circle (1);
\end{tikzpicture}
\end{center}
Now, if $a$ and $b'$ were incompatible, then $T_1$ was already
present on points $a_1$, $a$, $b$, $b'$ in the original instance, and
hence cannot be introduced by the fusion. Similarly, if $b$ and $a'$
were incompatible, then $T_1$ was already present on points $b$,
$a'$, $b'$, $a_2$ in the original instance.
\end{proof}

\begin{definition}
$\forall v,v'$, $\forall a,b\in A_v$, we say that $b$ is \emph{better
than} $a$ \emph{with respect to} $v'$, which we denote by $a\leq b$
for $(v,v')$ (or for $v'$), if every point in $A_{v'}$ compatible
with $a$ is also compatible with $b$.
\end{definition}

It is easy to see that $\leq$ is a partial order.

\begin{remark}
We also have the relations $\geq$, <,> and =, derived in the obvious
way from $\leq$.
\end{remark}

\begin{lemma}\label{triord}
\begin{enumerate}
\item $\forall (v,v')$, the order $\leq$ on $A_v$ with respect to $v'$ is total.
\item $\forall v$, $\forall a,b\in A_v$, there is $v'$ such that $a<b$ for $v'$.
\item $\forall v$, $\forall a,b\in A_v$, there is only one $v'$ such that $a<b$ for $v'$.
\end{enumerate}
\end{lemma}

\begin{proof}
\begin{enumerate}
\item Because the gadget $X$ cannot occur.
\item Otherwise $b$ is dominated by $a$ and we can remove it by neighbourhood
substitution.
\item Because of the initial forbidden pattern.
\end{enumerate}
\end{proof}

\begin{lemma}\label{revord}
If $a<b<c$ for $(v_0,v_1)$, then there exists $v_2\neq v_1$ such that
$c<b<a$ for $(v_0,v_2)$.
\end{lemma}

\begin{proof}
Since we have $a<b$ for $(v_0,v_1)$, from Lemma~\ref{triord}.2 there
is some $v_2$ such that $b<a$ for $(v_0,v_2)$. Since $b<c$ for
$(v_0,v_1)$, $c\leq b$ for $(v_0,v_2)$ by Lemma~\ref{triord}.3. If
$c<b$ for $v_2$, then we have the lemma. Otherwise, we have $c=b<a$
for $v_2$. Since $b<c$ for $v_1$, there exists $v_3\neq v_1,v_2$ such
that $c<b$ for $v_3$. Since $a<b$ for $v_1$, $b\leq a$ for $v_3$. So
$c<b\leq a$ for $v_3$. So we have $c<a$ for both $v_2$ and $v_3$,
which is not possible. So we must have $c<b<a$ in $v_2$.
\end{proof}

\begin{lemma}\label{striord}
$\forall a,b,c,d\in A_{v_0}$, for all $v_1 \neq v_0$ none of the
following is true:
\begin{enumerate}
\item $a=b<c<d$ \ for $v_1$.
\item $a<b=c<d$ \ for $v_1$.
\item $a<b<c=d$ \ for $v_1$.
\end{enumerate}
\end{lemma}

\begin{proof}
We give the proof only for the case 1, since the proofs of cases
2 and 3 are almost identical. 
Since we have $a<c<d$ for $v_1$, from Lemma~\ref{revord} there exists
$v_2$ such that $d<c<a$ for $v_2$. Likewise, since $b<c<d$ for $v_1$,
there exists $v_2'$ such that $d<c<b$ for $v_2'$. Since $d<c$ for
both $v_2$ and $v_2'$, $v_2=v_2'$ by Lemma~\ref{triord}.3. This leaves three possibilities:
\begin{enumerate}
\item $d<c<b<a$ for $v_2$: from Lemma~\ref{revord} we know there is $v_3$
such that $a<b<c$ for $v_3$. So we have $a<c$ for both $v_1$ and
$v_3$ with $v_1\neq v_3$ (since $a=b$ for $v_1$), which is not
possible by Lemma~\ref{triord}.3. So we cannot have this
possibility.
\item $d<c<b=a$ for $v_2$: since $a=b$ for both $v_1$ and $v_2$,
by Lemma~\ref{triord}.2 there
is a different $v_3$ such that $a<b$ for $v_3$. Since $c<b$ for $v_2$
and $v_3\neq v_2$, $b\leq c$ for $v_3$. So $a<c$ for $v_3$. But we
also have $a<c$ for $v_1$ and $v_1\neq v_3$. So by
Lemma~\ref{triord}.3 we cannot have this possibility.
\item $d<c<a<b$ for $v_2$: equivalent to the case $d<c<b<a$ after interchanging
$a$ and $b$.
\end{enumerate}
\end{proof}

\begin{corollary}\label{revstri}
If for some $(v_0,v_1)$, we have at least three equivalence classes
in the order on $A_{v_0}$ with respect to $v_1$ then:
\begin{enumerate}
\item The order on $A_{v_0}$ with respect to $v_1$ is strict.
\item There is $v_2$ such that the order on $A_{v_0}$ with respect to $v_2$
is the exact opposite to the order on $A_{v_0}$ with respect to $v_1$.
\item $\forall v_3$ such that $v_3\neq v_0,v_1,v_2$, there is only one equivalence
class in the order on $A_{v_0}$ with respect to $v_3$.
\end{enumerate}
\end{corollary}

\begin{proof}
Points 1,2 and 3 follow respectively from Lemma~\ref{striord},
Lemma~\ref{revord} and Lemma~\ref{triord}.
\end{proof}

\begin{lemma}\label{twotwo}
$\forall a,b,c,d\in A_{v_0}$, there is no $v_1$ such that $a=b<c=d$
for $v_1$.
\end{lemma}

\begin{proof}
By Lemma~\ref{triord}.2, we know there is some $v_2$
such that $a<b$ for $v_2$. Since we have $a<c$ and $a<d$ for $v_1$,
by Lemma~\ref{triord}.3, we have $c\leq a$ and $d\leq a$ for $v_2$. From
Corollary~\ref{revstri}, we cannot have $c<a<b$ or $d<a<b$ for $v_2$,
so we have $d=c=a<b$ for $v_2$. Since we have $c=d$ for both $v_1$
and $v_2$, we have a different variable $v_3$ such that $c<d$ for
$v_3$. Since $c<b$ for $v_2$ and $v_3\neq v_2$, $b\leq c$ for $v_3$.
So $b<d$ for $v_3$. But we also have $b<d$ for $v_1$ and $v_1\neq
v_3$. So, by Lemma~\ref{triord}.3, we cannot have this possibility.
\end{proof}

\begin{lemma} \label{troistiers}
If for some $(v,v')$ there are at least three equivalence classes in
the order on $A_v$ with respect to $v'$, then there are the same
number of points in both $A_v$ and $A_{v'}$ and both the order on
$A_v$ with respect to $v'$ and the order on $A_{v'}$ with respect to
$v$ are strict.
\end{lemma}

\begin{proof}
Let $d$ be the number of points in $A_v$ and $d'$ the number of
points in $A_{v'}$. From Lemma~\ref{striord} we know that the order on
$A_v$ with respect to $v'$ is strict. So we have $a_1<a_2<\dots<a_d$
for $(v,v')$. So we have $(a'_1,a'_2,\dots,a'_{d-1})$ such that
$\forall 1\leq i<d$, $a_i$ and $a'_i$ are incompatible but $a_{i+1}$
and $a'_i$ are compatible. So $\forall 2\leq i<d$ we have $a_i$ and
$a'_i$ which are incompatible but $a_i$ and $a'_{i-1}$ are
compatible. So, by Lemma~\ref{triord}.1 we have
$a'_1>a'_2>\dots>a'_{d-1}$ for $v$. Moreover, since
$a_1$ is incompatible with $a'_1$, $a_1$ is incompatible with all
$a'_i$ for $1\leq i<d$. By arc consistency, we have $a'_0$ such that
$a_1$ and $a'_0$ are compatible. So we have
$a'_0>a'_1>a'_2>\dots>a'_{d-1}$. So we have $d\leq d'$ and at least
three equivalence classes in the order on $A_{v'}$ with respect to
$v$. By switching $v$ and $v'$ in the proof, we can prove the
remaining claims of the Lemma.
\end{proof}

We say that the pair of variables $(v,v')$ is a \emph{3-tiers pair}
if there are at least 3 classes of equivalence in the order on $A_v$
with respect to $v'$; we say that it is a \emph{2-tiers pair}
otherwise.

We suppose we have $v$ and $v'$ such that $(v,v')$ is a 3-tiers pair.
Let $d$ be the number of points in $A_v$. From Lemma~\ref{troistiers}
we know that the points in $A_v$ can be denoted $a_1<a_2<\dots<a_d$ for
$v'$ and the points in $A_{v'}$ can be denoted $b_1<b_2<\dots<b_d$
for $v$. We will show that we can perform a simple fusion of $v$ and
$v'$ with fusion function $f$ given by $f(a_i)=b_{d+1-i}$
($i=1,\ldots,d$).

\begin{lemma}\label{compa}
$\forall 1\leq i\leq d$, $\{b_{d+1-i},b_{d+1-i+1},\dots,b_d\}$ is the
exact set of points compatible with $a_i$.
\end{lemma}

\begin{proof}
If we have $a_i<a_j$ for $v'$, it means $a_i$ is compatible with
strictly less points in $A_{v'}$ than $a_j$. By arc consistency,
every point in $A_v$ is compatible with a point in $A_{v'}$. So
$\forall 1\leq i\leq d$, we have $d$ possibilities $(1,2,\dots,d)$
for the number of points compatible with $a_i$. Since we have $d$
points in $A_v$, it means that $\forall 1\leq i\leq d$, $a_i$ is
compatible with $i$ points in $A_{v'}$. By definition of the order on
a variable with respect to another variable, the points in $A_{v'}$
compatible with a point $a_i\in A_v$ are the greatest points for $v$.
So we have the Lemma.
\end{proof}

\begin{lemma}\label{solv}
$\forall 1\leq i\leq d$, if $a_i$ is in a solution $S$, then there is
a solution $S'$ such that both $b_{d+1-i}$ and $a_i$ are in $S'$.
\end{lemma}

\begin{proof}
Let $b$ be the point of $S$ in $v'$. If $b_{d+1-i}=b$, then we have
the result. Otherwise, let $c\neq b$ be a point of $S$. If $c=a_i$,
then from Lemma~\ref{compa} we know that $c$ is compatible with
$b_{d+1-i}$. Otherwise, let $v_c = var(c)$. So $v_c\neq v$. From
Lemma~\ref{compa} we have $b_{d+1-i}<b$ for $v$. So $b\leq b_{d+1-i}$
for $v_c$. So $b_{d+1-i}$ is compatible with $c$. So $b_{d+1-i}$ is
compatible with all the points in $S$. So we have a solution $S'$
obtained by replacing $b$ by $b_{d+1-i}$ in $S$ which contains both
$a_i$ and $b_{d+1-i}$.
\end{proof}

We now perform the simple fusion of $v$ and $v'$ by with fusion
function $f(a_i)=b_{d+1-i}$ for $1\leq i\leq d$. By Lemma~\ref{solv},
this is a valid simple fusion and by Lemma~\ref{lem:fusion} and the following
lemma the resulting instance is in \cspneg{T_1} and
solvable if and only if the original instance was solvable.

\begin{lemma}
The simple fusion of $v$ and $v'$ in an instance in \cspneg{T_1},
where $(v,v')$ is a 3-tiers pair, does not create the forbidden
pattern.
\end{lemma}

\begin{proof}
Let $a,b$ be two distinct points in $A_v$. Without loss of
generality, suppose that $a < b$ for $v'$. By choice of the fusion
function $f$, $b$ is the smallest (according to the order $<$ for $v'$)
of the points in $A_v$ compatible with $f(b)$.
Therefore, $a$ and $f(b)$ are incompatible. The result then follows
from Lemma~\ref{lem:T1notintroduced}.
\end{proof}

From now on, $\forall (v,v')$, we can assume that each pair $(v,v')$
is a 2-tiers pair. We call \emph{winner} for $(v,v')$ the points in
the greater equivalence class in the order for $(v,v')$. The other
points are called \emph{losers} for this order. A same point can (and
actually will) be a winner for a given order and a loser for another
order. If for a given order there is only one equivalence class, then
all the points are considered winners.

The winners for $(v,v')$ are compatible with all the points in
$A_{v'}$. The losers for $(v,v')$ are only compatible with the
winners for $(v',v)$.

We say that a variable $v$ is \emph{one-winner} if $\forall v'\neq
v$, either only one point of $A_v$ is a winner for $(v,v')$ or all
the points in $A_v$ are. Similarly, we say that a variable $v$ is
\emph{one-loser} if $\forall v'\neq v$, either only one point of
$A_v$ is a loser for $(v,v')$ or all the points of $A_v$ are winners
for $(v,v')$.

\begin{lemma}\label{previous}
$\forall v$, if there is $v'$ such that there is only one winner for
$(v,v')$, then $v$ is one-winner. Similarly, if there is $v'$ such
that there is only one loser for $(v,v')$, then $v$ is one-loser.
\end{lemma}

\begin{proof}
Let $a,b,c,d,e,f\in A_v$ be such that there are $v_1\neq v_2$ with
$a=b<c$ for $v_1$, $d<e=f$ for $v_2$, $a\neq b$ and $e\neq f$. If
$d\neq c$, then from Lemma~\ref{twotwo}, we have $a=b=d<c$ for $v_1$
and $d<e=f=c$ for $v_2$. So $d<c$ for both $v_1$ and $v_2$ with
$v_1\neq v_2$ (which is a contradiction by Lemma~\ref{triord}.3). So
we cannot have $d\neq c$. So $d=c$. So we have $c<e=f$ for $v_2$.
From Lemma~\ref{twotwo} we have $c<e=f=a=b$ for $v_2$. Since we have
$a=b$ for both $v_1$ and $v_2$, by Lemma~\ref{triord}.2 there is a
different variable $v_3$ such that $a<b$ for $v_3$. Since $a<c$ for
$v_1$, $c\leq a$ for $v_3$. So $c<b$ for $v_3$. So $c<b$ for both
$v_2$ and $v_3$ with $v_2\neq v_3$. This is impossible by Lemma~\ref{triord}.3.
So we have the Lemma.
\end{proof}

\begin{corollary}\label{follow}
$\forall v$, either $v$ is one-winner or $v$ is one-loser.
\end{corollary}

\begin{proof}
Lemma~\ref{triord}.2 tells us that there exists $v'$ and $a,b\in
A_v$ such that $a<b$ for $v'$. By Lemma~\ref{twotwo}, either there is
only one winner for $(v,v')$ or only one loser. The result follows
directly from Lemma~\ref{previous}.
\end{proof}

Let $E$ be the set of one-winner variables and $F=V\backslash E$ with
$V$ being the set of all variables. From Corollary~\ref{follow}, the
variables in $F$ are one-loser. Let $v_a,v_b\in E$ be such that there
is a non-trivial constraint between $v_a$ and $v_b$. Since $v_a\in
E$, there is only one winner $a$ for $v_b$ in $v_a$. Similarly, there
is only one winner $b$ for $v_a$ in $v_b$. We can perform a complex
fusion of $v_a$ and $v_b$ with hinge value $a$ and fusion function
the constant function $f=b$.

By Lemma~\ref{lem:fusion}, the instance resulting from this fusion is
solvable if and only if the original instance was solvable.

\begin{lemma}
The complex fusion of two one-winner variables $v_a$ and $v_b$ in an
instance of \cspneg{T_1} does not create the forbidden pattern.
\end{lemma}

\begin{proof}
Suppose that $(c,c')$ and $(d,d')$ are corresponding pairs of points
during this fusion, with $c \neq d \in A_{v_a}$ and $c' \neq d' \in
A_{v_b}$. Since $v_a$ only has one winner for $v_b$, we know that
either $c$ or $d$ is a loser for $v_b$. Without loss of generality,
suppose $d$ is a loser for $v_b$. Since $v_b$ only has one winner for
$v_a$, and losers are only compatible with winners, we know that $d$
is incompatible with $c'$ (since it is necessarily compatible with
$d'$ for the fusion to take place). The result now follows directly
from Lemma~\ref{lem:T1notintroduced}.
\end{proof}

We have shown that we can fusion any pair of variables in $E$
between which there is a non-trivial constraint. We now do the same for $F$.

Let $E$ be the set of one-winner variables and $F=V\backslash E$ with
$V$ being the set of all variables. From Corollary~\ref{follow}, we
know all variables in $F$ are one-loser. Let $v_a,v_b\in F$ be such
that there is a non-trivial constraint between $v_a$ and $v_b$.
Since there is a non-trivial
constraint between $v_a$ and $v_b$, there is some $a\in A_{v_a}$ and
some $b\in A_{v_b}$ such that $a$ is incompatible with $b$.

\begin{lemma}\label{noinspi}
If $a'\in A_{v_a}$ is in a solution $S$ and $a'\neq a$, then $b$ is
in a solution $S'$ containing $a'$.
\end{lemma}

\begin{proof}
Let $b'$ be the point of $S$ in $v_b$. If $b'=b$, then we have the
result. Since $v_a$ is a one-loser variable, we know that all points
in $A_{v_a}$ other than $a$ are winners. Thus $a'$ is compatible with
$b$. By a symmetric argument, $b'$ is compatible with $a$. If we have
$c\in S$ such that $b$ is incompatible with $c$,
then $a$, $b'$, $c$ and $b$ form the forbidden pattern. So $b$ is
compatible with all the points in $S$. So if we replace $b'$ by $b$
in $S$ we get a solution $S'$ containing both $a'$ and $b$.
\end{proof}

It follows from Lemma~\ref{noinspi} that we only need to consider
solutions containing $a$ or $b$. We can therefore perform a complex
fusion of $v_a$ and $v_b$ with hinge value $a$ and fusion function
the constant function $f=b$.

\begin{lemma}
The complex fusion of $v_a$ and $v_b$ in an instance of \cspneg{T_1}
does not create the forbidden pattern.
\end{lemma}

\begin{proof}
In all pairs $(c,c')$ of corresponding points in this fusion, we must
have either $c=a$ or $c'=b$. Suppose that $(c,c')$ and $(d,d')$ are
corresponding pairs of points during the fusion, with $c \neq d \in
A_{v_a}$ and $c' \neq d' \in A_{v_b}$. Without loss of generality, we
can assume that $c=a$ and $d'=b$. But we know that $a$ was
incompatible with $b$. The result now follows directly from
Lemma~\ref{lem:T1notintroduced}.
\end{proof}

We say a point $a$ is \emph{weakly incompatible} with a variable $v$
if there exists some $b\in A_v$ such that $a$ is incompatible with
$b$.

The total number of assignments decreases when we fuse variables, so
the total number of (simple or complex) fusions that can be performed
is linear in the size of the original instance. After all possible
fusions of pairs of variables, we have two sets of variables $E$ and
$F=V \setminus E$ such that:
\begin{itemize}
\item $\forall v,v'\in E$, there is no non-trivial constraint between $v$ and $v'$.
\item $\forall v,v'\in F$, there is no non-trivial constraint between $v$ and $v'$.
\item $\forall f$ a point in $A_v$ for some $v \in F$,
$f$ is weakly incompatible with one and only one variable $v' \in E$.
Furthermore, $f$ is incompatible with all points of $A_{v'}$ but one
(since $v' \in E$ is a one-winner variable).
\item The only possible non-trivial constraint between a variable $v_1\in E$ and
another variable
$v_2\in F$ is the following with $d_1$ being the size of the domain of $v_1$:
\begin{itemize}
\item There is a point $b\in A_{v_2}$ incompatible with exactly $d_1-1$ points in
$A_{v_1}$.
\item $\forall b'\in A_{v_2}$ with $b'\neq b$, $b'$ is compatible with all the points
in $A_{v_1}$.
\end{itemize}
\begin{center}
\begin{tikzpicture}
\node at (3,0.7) {$A_{v_1}$};
\node at (7,0.7) {$A_{v_2}$};
\node at (7.2,2) {$b$};
\draw [style=dashed] (3,3)--(7,2)
(3,4)--(7,2)
(3,2)--(7,2)
(3,5)--(7,2);
\draw[] (3,6)--(7,2)
(7,3)--(3,2)
(7,3)--(3,3)
(7,3)--(3,4)
(7,3)--(3,5)
(7,3)--(3,6)
(7,4)--(3,2)
(7,4)--(3,3)
(7,4)--(3,4)
(7,4)--(3,5)
(7,4)--(3,6)
(7,5)--(3,2)
(7,5)--(3,3)
(7,5)--(3,4)
(7,5)--(3,5)
(7,5)--(3,6);
\draw (3,4) circle (1 and 3);
\draw (7,3.5) circle (1 and 2.5);
\end{tikzpicture}
\end{center}
\end{itemize}

We call NOOSAT (for Non-binary Only Once Sat) the following problem:
\begin{itemize}
\item A set of variables $V=\{v_1,v_2,\dots,v_e\}$.
\item A set of values $A=\{a_1,a_2,\dots,a_n\}$.
\item A set of clauses $C=\{C_1,C_2,\dots,C_f\}$ such that:
\begin{itemize}
\item Each clause is a disjunction of literals, with a literal being in this case of the
form $v_i=a_j$.
\item $\forall i,j,p,q ((v_i=a_j)\in C_p)\wedge ((v_i=a_j)\in C_q)\Rightarrow p=q$.
\end{itemize}
\end{itemize}

\begin{lemma}
CSP($\overline{T_1}$) can be reduced to NOOSAT in polynomial time.
\end{lemma}

\begin{proof}
We suppose we have a binary CSP instance in \cspneg{T_1} and
preprocessed as described above. We have shown that the non-trivial
constraints between variables $v \in F$ and $v'\in E$ are all of the
form $v=b\Rightarrow v'=a$. Furthermore, each variable-value
assignment $v=b$ occurs in exactly one such constraint. For any $v
\in F$, we can replace the set of such constraints $v=b_i\Rightarrow
v_i=a_i$, for all values $b_i$ in the domain of $v$, by the clause
$(v_1 = a_1) \vee \ldots \vee (v_d = a_d)$. It only remains to prove
that no literal appears in two distinct clauses. Suppose that we have
a literal $v_1= a$ which occurs in two distinct clauses. Then there
must have been two constraints $v_2=b \Rightarrow v_1=a$ and $v_3=c
\Rightarrow v_1=a$ and with $v_1 \in E, v_2 \neq v_3 \in F$. Let $a'
\neq a$ be a point in $A_{v_1}$. Then $b$ and $c$ are both
incompatible with $a'$ but compatible with $a$.
But this is precisely the forbidden pattern. This contradiction shows
that CSP($\overline{T_1}$) can be reduced to NOOSAT.
\end{proof}

The constraints in NOOSAT are convex when viewed as
$\{0,\infty\}$-valued cost functions, and the clauses are non
overlapping. So, from~\cite{nonoverlap}, it is solvable in polynomial
time. So the forbidden pattern $T_1$ is tractable.

\paragraph{Proof of tractability of $T_2:$}
Let the gadget $N$ be the following pattern: two variables $v_0,v_1$
with points $a,b \in A_{v_0}$ and $c,d \in A_{v_1}$, such that $a$
and $b$ compatible with $d$, $b$ incompatible with $c$ and with the
structure $a\neq b$.

\begin{center}
\scalebox{1}{\begin{tikzpicture}
\node at (0,-0.2) {$b$};
\node at (-3,3.2) {$d$};
\node(c) at (3,3) {};
\node at (-4,3.2) {$c$};
\node at (-1,-0.2) {$a$};
\node at (-0.5,-1.3) {$A_{v_0}$};
\node at (-5.3,3) {$A_{v_1}$};
\draw[style=dashed] (0,0)--(-4,3);
\draw[] (0,0)--(-3,3)
(-3,3)--(-1,0);
\draw (-0.5,0) circle (1.5 and 1);
\draw (-3.5,3) circle (1.5 and 1);
\end{tikzpicture}}
\end{center}

Suppose we have the gadget $N$. Let $v_2$ be a variable with $v_2\neq
v_0$, $v_2\neq v_1$ and let $e$ be a point in $A_{v_2}$ such that $a$
and $e$ are compatible.
If $b$ is incompatible with $e$, then we have the forbidden pattern
$T_2$ on $d$, $c$, $b$, $a$, $e$. So $b$ is compatible with $e$. If
all the points in $A_{v_1}$ which are compatible with $a$ are also
compatible with $b$, then we can remove $a$ by neighbourhood
substitution. So, assuming that neighbourhood substitution operations
have been applied until convergence, if we have the gadget $N$, then
there is a point $g\in A_{v_1}$ compatible with $a$ and incompatible
with $b$.

Let $v_3\neq v_1$. By arc consistency, there is $h\in A_{v_3}$ such
that $h$ is compatible with $a$. If $b$ and $h$ are incompatible,
then we have the forbidden pattern $T_2$ on $d$, $g$, $b$, $a$, $h$.
So $b$ and $h$ are compatible. If there is $i\in A_{v_3}$ such that
$b$ and $i$ are incompatible, then we have the forbidden pattern on
$h$, $i$, $b$, $a$, $g$.
So $b$ is compatible with all the points in $A_{v_3}$. So, if we have
the gadget $N$, then $b$ is compatible with all the points of the
instance outside $v_0,v_1$.

\begin{definition}
A constraint $C$ between two variables $v$ and $v'$ is
\emph{functional} from $v$ to $v'$ if $\forall a\in A_v$, there
is one and only one point in $A_{v'}$ compatible with $a$.
\end{definition}

Let the gadget $V^{-}$ be the pattern comprising three variables
$v_4$, $v_5$, $v_6$ and points $a \in A_{v_4}$, $b \in A_{v_5}$, $c
\in A_{v_6}$ such that $a$ incompatible with both $b$ and $c$.

From now on, since $V^{-}$ is a tractable pattern~\cite{cz11:aij}, we
only need to consider the connected components of the constraint
graph which contain $V^{-}$.

\begin{lemma}\label{1412one}
If in an instance from \cspneg{T_2}, we have the gadget $V^{-}$, then
the constraint between $v_5$ and $v_4$ is functional from $v_5$ to
$v_4$ and the constraint between $v_4$ and $v_6$ is functional from
$v_6$ to $v_4$.
\end{lemma}

\begin{proof}
By symmetry, it suffices to prove functionality from $v_5$ to $v_4$.
We suppose we have the gadget $V^{-}$. Let $d\in A_{v_5}$ be
compatible with $a$.
Since $a$ is weakly incompatible with two different variables, $a$,
$b$ and $d$ cannot be part of the gadget $N$. So the only point in
$A_{v_4}$ compatible with $d$ is $a$. So if a point in $A_{v_4}$ is
compatible with $a$, then it is only compatible with $a$. Likewise,
if a point in $A_{v_6}$ is compatible with $a$, then it is only
compatible with $a$.

Let $f\neq a$ be a point in $A_{v_4}$. By arc consistency, we have
$d\in A_{v_5}$ and $e\in A_{v_6}$ such that $a$ is compatible with
$d$ and with $e$. From the previous paragraph, we know that both $d$
and $e$ are incompatible with $f$.

\begin{center}
\scalebox{1}{\begin{tikzpicture}
\node at (0,-0.2) {$a$};
\node at (-3,3.2) {$b$};
\node at (-4,3.2) {$d$};
\node at (0,-1.2) {$f$};
\node at (3,3.2) {$c$};
\node at (4,3.2) {$e$};
\node at (0,-2.3) {$A_{v_4}$};
\node at (-5.3,3) {$A_{v_5}$};
\node at (5.3,3) {$A_{v_6}$};
\draw[style=dashed] (0,0)--(-3,3)
(0,-1)--(-4,3)
(0,-1)--(4,3)
(0,0)--(3,3);
\draw[] (0,0)--(-4,3)
(0,0)--(4,3);
\draw (0,-0.5) circle (1 and 1.5);
\draw (-3.5,3) circle (1.5 and 1);
\draw (3.5,3) circle (1.5 and 1);
\end{tikzpicture}}
\end{center}

So $d$, $e$ and $f$ form the gadget $V^{-}$. So each point in
$A_{v_5}$ and $A_{v_6}$ compatible with $f$ is compatible with only
one point of $A_{v_4}$. So each point in $A_{v_5}$ and $A_{v_6}$
compatible with a point in $A_{v_4}$ is compatible with only one
point of $A_{v_4}$. By arc consistency, each point of $A_{v_5}$ and
$A_{v_6}$ is compatible with exactly one point of $A_{v_4}$. So the
constraint between $v_4$ and $v_5$ is functional from $v_5$ to $v_4$.
\end{proof}

\begin{lemma}\label{1412two}
In a connected component of the constraint graph containing $V^{-}$
of an instance from \cspneg{T_2}, all constraints are either
functional or trivial.
\end{lemma}

\begin{proof}
Let $P(V)$ be the following property: "$V$ is a connected subgraph of
size at least two of the constraint graph and all constraints in $V$
are either functional or trivial".

$P(\{v_4,v_5\}$ is true from Lemma~\ref{1412one}.

Let $V_\text{all}$ be the set of all variables of the connected
subgraph of the constraint graph containing $V^{-}$. Let $V$ be a
maximum (with respect to inclusion) subset of $V_\text{all}$ for
which $P(V)$. Let $V'=V_\text{all}\backslash V$. Let $v'\in V'$. Let
$v\in V$ be such that $C(v,v')$ (the constraint on $v,v'$) is
non-trivial. So there is $d\in A_v$ and $e\in A_{v'}$ such that $d$
and $e$ are incompatible. Since $V$ is connected and of cardinality
at least two, then there is $v'' \in V$ such that $C(v,v'')$ is
functional. By arc consistency and elimination of single-valued
variables, there is necessarily a point $f\in A_{v''}$ such that $d$
and $f$ are incompatible. So $d$, $e$ and $f$ form the gadget
$V^{-}$. From Lemma~\ref{1412one} we know $C(v,v')$ is functional. So
$P(V)$ is true for all subsets of $V_\text{all}$.
\end{proof}

\begin{lemma}\label{1412three}
$\forall v$, all points in $A_v$ are weakly incompatible with the
exact same set of variables.
\end{lemma}

\begin{proof}
Let $a\in A_v$ be weakly incompatible with $v'$. So $C(v,v')$ is non
trivial. So $C(v,v')$ is functional.

If $C(v,v')$ is functional from $v$ to $v'$, then a point in $A_v$
can be compatible with only one point in $A_{v'}$. We can assume, by
elimination of single-valued variables, that there are at least two
points in $A_{v'}$, so every point in $A_v$ is weakly incompatible
with $v'$.

If $C(v,v')$ is functional from $v'$ to $v$, then let $b\neq a$ in
$v$. By arc consistency, we know there is $c\in A_{v'}$ such that $a$
and $c$ are compatible. Since $C(v,v')$ is functional from $v'$ to
$v$, then $c$ is compatible with only one point in $A_v$, in that
case $a$, so $b$ is incompatible with $c$. So every point in $A_v$ is
weakly incompatible with $v'$.

So $\forall (v,v'), a\in A_v$ weakly incompatible with $v'\Rightarrow
\forall b\in A_v, b$ weakly incompatible with $v'$.
\end{proof}

\begin{definition}
A sequence of variables $(v_0,v_1,\dots,v_k)$ is a \emph{path of
functionality} if $\forall 0\leq i\leq k-1: C(v_i,v_{i+1})$ is
functional from $v_i$ to $v_{i+1}$.
\end{definition}

\begin{lemma}
$\forall v,v'$, either $v'$ is a leaf in the constraint graph, or
there is a path of functionality from $v$ to $v'$.
\end{lemma}

\begin{proof}
Since we are in a connected component, there is a path of
incompatibility $(v_0=v,v_1,v_2,\dots,v_k=v')$ with all $v_i$
different. If $v'$ is not a leaf, then we have a path of
incompatibility $(v_0,v_1,v_2,\dots,v_{k-1},v_k,v_{k+1})$ with
$v_{k+1}\neq v_{k-1}$. From Lemma~\ref{1412three} we have a path of
incompatibility $(a_0\in A_{v_0},a_1\in A_{v_1},\dots,a_k\in
A_{v_k},a_{k+1}\in A_{v_{k+1}})$. So $\forall 1\leq i\leq k$,
$a_{i-1}$, $a_i$ and $a_{i+1}$ form the gadget $V^{-}$. So from
Lemma~\ref{1412one}, $\forall 1\leq i\leq k$, $C(v_{i-1},v_{i})$ is
functional from $v_{i-1}$ to $v_i$. So we have a path of
functionality from $v$ to $v'$.
\end{proof}

Leaves can be added to an existing solution by arc consistency. So
once we have removed all the points we can (from the gadget $N$) we
only have to set an initial variable $v_0$ and see if the $q$ chains
of implications (with $q$ being the number of points in $A_{v_0}$)
lead to a solution. So the pattern $T_2$ is tractable.

\paragraph{Proof of tractability of $T_3:$}

Consider an instance from \cspneg{T_3}.

Let $N$ be the following gadget: two variables $v_0$ and $v_1$ such
that we have $a$ in $A_{v_0}$, $b$ and $c$ in $A_{v_1}$, with $b\neq
c$, $a$ compatible with both $b$ and $c$ and $c$ incompatible with a
point in $A_{v_0}$.

Let $d$ be a point in $A_{v_2}$, with $v_2\neq v_0,v_1$. If $d$ is
compatible with $c$ but not with $b$, then we have the forbidden
pattern $T_3$. So if $c$ is compatible with a point outside of
$A_{v_0}$, then $b$ is also compatible with the same point.

Let $S$ be a solution containing $c$. Let $e$ be the point of $S$ in
$A_{v_0}$. If $e$ is compatible with $b$, then we can replace $c$ by
$b$ in $S$ while maintaining the correctness of the solution, since
all the points in the instance outside of $A_{v_0}$ which are
compatible with $c$ are also compatible with $b$.

If $e$ is not compatible with $b$, then edges $\{b,e\}$, $\{e,c\}$
and $\{c,a\}$ form the gadget $N$. So, by our previous argument, if
$e$ is compatible with a point outside of $A_{v_1}$, then $a$ is also
compatible with the same point. We can then replace $c$ by $b$ and
$e$ by $a$ in $S$ while maintaining the correctness of the solution,
since all the points in the instance outside of $A_{v_0}$ which are
compatible with $c$ are also compatible with $b$ and all the points
in the instance outside of $A_{v_1}$ which are compatible with $e$
are also compatible with $a$. So if a solution contains $c$, then
there is another solution containing $b$. Thus we can remove $c$
while preserving solvability.

So each time the gadget $N$ is present, we can remove one of its
points and hence eliminate $N$. The gadget $N$ is a known tractable
pattern since forbidding $N$ is equivalent to saying that all
constraints are either trivial or bijections. So if it is not
present, then the instance is tractable. It follows that the pattern
$T_3$ is tractable.

\paragraph{Proof of tractability of $T_4:$}

Consider an instance from \cspneg{T_4}.

\begin{multicols}{2}

Let $W$ be the following gadget: two variables $v_0$ and $v_1$ such
that we have $a$ in $A_{v_0}$, $b$ and $c$ in $A_{v_1}$, with $b\neq
c$, $a$ compatible with both $b$ and $c$, and $a$ incompatible with a
point in $A_{v_1}$.

\begin{center}
\scalebox{0.5}{\begin{tikzpicture} \node[scale=2] at (1.5,-0.4)
{$c$}; \node[scale=2] at (0.5,-0.4) {$b$}; \node[scale=2] at (-3,3.4)
{$a$}; \node[scale=2] at (1,-0.4) {$\neq$}; \node[scale=2] at
(-4.7,3) {$A_{v_0}$}; \node[scale=2] at (3.2,0) {$A_{v_1}$};
\draw[thick,color=black,style=dashed] (-0.5,0)--(-3,3);
\draw[thick,color=black] (-3,3)--(0.5,0)
(-3,3)--(1.5,0);
\draw (0.5,0) circle (2 and 1);
\draw (-3,3) circle (1);
\end{tikzpicture}}
\end{center}
\end{multicols}

Let $f$ be a point in $A_{v_2}$, with $v_2\neq v_0,v_1$. If $f$ is
compatible with $b$ but not with $c$, then we have the forbidden
pattern $T_4$. Likewise, if $f$ is compatible with $c$ but not with
$b$, then we have the forbidden pattern $T_4$. So all the points of
the instance not in $A_{v_0}$ or $A_{v_1}$ have the same
compatibility towards $b$ and $c$.

If all points in $A_{v_0}$ compatible with $b$ are also compatible
with $c$, then all the points in the instance compatible with $b$ are
also compatible with $c$ and by neighborhood substitution we can
remove $b$. Thus we can assume there is $d$ in $A_{v_0}$ such that
$d$ is compatible with $b$ but not with $c$.

Let $S$ be a solution containing $c$. Let $e$ be the point of $S$ in
$v_0$. If $e$ is compatible with $b$, then we can replace $c$ by $b$
in $S$ while maintaining the correctness of the solution, since $b$
and $c$ have the same compatibility towards all the points in the
instance outside of $A_{v_0}$ and $A_{v_1}$. If $e$ is not compatible
with $b$, then edges $\{b,e\}$, $\{b,a\}$ and $\{b,d\}$ form the
gadget $W$. So, by our argument above, $a$ and $d$ have the same
compatibility towards all the points in the instance outside of
$A_{v_0}$ and $A_{v_1}$. Similarly, edges $\{c,d\}$, $\{c,a\}$ and
$\{c,e\}$ form the gadget $W$. So $a$ and $e$ have the same
compatibility towards all the points in the instance outside of
$A_{v_0}$ and $A_{v_1}$. So $d$ and $e$ have the same compatibility
towards all the points in the instance outside of $A_{v_0}$ and
$A_{v_1}$. Thus we can replace $c$ by $b$ and $e$ by $d$ in $S$ while
maintaining the correctness of the solution, since $b$ and $c$ have
the same compatibility towards all the points in the instance outside
of $A_{v_0}$ and $A_{v_1}$ and $e$ and $d$ have the same
compatibility towards all the points in the instance outside of
$A_{v_0}$ and $A_{v_1}$. So if a solution contains $c$, then there is
another solution containing $b$. Thus we can remove $c$.

Therefore, each time the gadget $W$ is present, we can remove one of
its points. The gadget $W$ is a known tractable pattern since
forbidding $W$ is equivalent to saying that all constraints are
zero-one-all~\cite{zoa}. So if it is not present, the instance is
tractable. Hence pattern $T_4$ is tractable.

\paragraph{Proof of tractability of $T_5:$}
The pattern $T_5$ is a sub-pattern of the broken-triangle pattern
$BTP$, a known tractable pattern~\cite{btp} on three constraints. So
the pattern $T_5$ is tractable.

\end{proof}

\section{Conclusion}

An avenue for future research is to investigate the possible
generalisations of the five tractable classes defined by forbidding
patterns $T_1,\ldots,T_5$. Possible generalisations include the
addition of costs, replacing binary constraints by $k$-ary
constraints ($k>2$) and adding extra constraints to the patterns.

\end{document}